\ifdefined\pdfoutput
\pdfoutput=1
\fi

\documentclass[11pt]{scrartcl}
\usepackage{array}
\usepackage{graphicx,subcaption,multirow,listings,float}
\usepackage[section]{placeins}
\usepackage{amsmath,amsthm,mathrsfs,amssymb}
\usepackage{bbm,bm,amsfonts}
\usepackage[mathscr]{eucal}
\usepackage{diagbox}
\usepackage{xcolor}
\usepackage{manyfoot,booktabs,textcomp}
\usepackage{algorithm,algorithmicx,algpseudocode}
\usepackage{authblk}
\usepackage{todonotes}
\usepackage[T1]{fontenc}
\usepackage[utf8]{inputenc}
\usepackage[a4paper,left=2.5cm,right=2.5cm]{geometry}
\usepackage{hyperref}
\usepackage[capitalize]{cleveref}
\DeclareOldFontCommand{\rm}{\normalfont\rmfamily}{\mathrm}
\ifdefined\DeclareUnicodeCharacter
\DeclareUnicodeCharacter{2011}{-}
\DeclareUnicodeCharacter{2013}{--}
\fi
\hypersetup{
  colorlinks=true,
  linkcolor=blue,
  urlcolor=blue,
  citecolor=blue
}

\newcommand{\tkappa}{\tilde{\kappa}}
\newcommand{\tgamma}{\tilde{\gamma}}
\newcommand{\teta}{\tilde{\eta}}
\newcommand{\bE}{\mathbb E}
\newcommand{\bS}{\mathbb S}
\newcommand{\bR}{\mathbb R}
\newcommand{\bZ}{\mathbb Z}
\newcommand{\bM}{\mathbb M}

\newcommand{\fq}{\mathbf q}
\newcommand{\fs}{\mathbf s}
\newcommand{\fx}{\mathbf x}
\newcommand{\fV}{\mathbf V}
\newcommand{\fp}{\mathbf p}

\newcommand{\kp}{\mathfrak p}

\newcommand{\kB}{\mathfrak B}

\newcommand{\kH}{\mathfrak H}
\newcommand{\kq}{\mathfrak q}

\newcommand{\cU}{\mathcal U}

\newcommand{\cG}{\mathcal G}
\newcommand{\cH}{\mathcal H}

\newcommand{\cC}{\mathcal C}
\newcommand{\cF}{\mathcal F}
\newcommand{\cB}{\mathcal B}
\newcommand{\cT}{\mathcal T}
\newcommand{\cO}{\mathcal O}

\newcommand{\kC}{\mathfrak{C}}
\newcommand{\rn}{\mathbf{n}}

\newcommand{\dfp}{\dot{\mathbf{p}}}
\newcommand{\dfe}{\dot{\mathbf e}}

\newcommand{\dtheta}{\dot\theta}
\newcommand{\htheta}{\hat\theta}
\newcommand{\hfp}{\hat{\mathbf{p}}}
\newcommand{\hkp}{\hat{\mathfrak{p}}}

\DeclareMathOperator\sign{sign}

\DeclareMathOperator\Lip{Lip}

\DeclareMathOperator\Length{Length}

\theoremstyle{plain}
\newtheorem{theorem}{Theorem}
\newtheorem{proposition}[theorem]{Proposition}%

\theoremstyle{definition}
\theoremstyle{remark}

\newcommand{\keywords}[1]{\par\noindent\textbf{Keywords:} #1}

\begin{document}

\title{Computing Smooth Geodesics under Two-Sided Curvature Bounds with Applications to Robotics and Image Analysis}

\author[1]{Da Chen}
\author[2]{Zhenjiang Li}
\author[3]{Jean-Marie Mirebeau}
\author[4]{Xuecheng Tai}
\author[5]{Jinglin Zhang}
\author[6]{Wei Zhang}
\author[1]{Laurent D. Cohen}
\affil[1]{CEREMADE, University Paris Dauphine, University-PSL, CNRS, UMR 7534, Paris, 75775, France}
\affil[2]{Department of Radiation Oncology, Shandong Cancer Hospital and Institute, Shandong First Medical University, Shandong Academy of Medical Sciences, Jinan, 250117, China}
\affil[3]{Department of Mathematics, Centre Borelli, ENS Paris-Saclay, CNRS, University Paris-Saclay, Gif-sur-Yvette, 91190, France}
\affil[4]{Norce, Nyg{\aa}rdsgaten 112, Bergen, 5008, Norway}
\affil[5]{School of Control Science and Engineering, Shandong University, Jinan, 610101, China}

\renewcommand\Authfont{\normalsize}
\renewcommand\Affilfont{\small}
\setlength{\affilsep}{0.35em}
\date{}
\maketitle

\begin{abstract}
Curvature of planar curves serves as a key regularization term for computing second-order minimal paths, due to its tight relevance to desirable geometric properties such as smoothness, rigidity, and elasticity. In this paper, we tackle a more challenging problem in  computational physics and geometry problem: tracking minimal paths whose curvature is constrained by arbitrary upper and lower bounds. For that purpose, we propose  a new curvature-bounded geodesic model, developed under the Hamilton-Jacobi-Bellman (HJB) partial differential equation (PDE) framework. It provides strong geometric control over minimal paths by enforcing curvature range constraints, whose paths are smooth and of bounded curvature limitation.
We also present a discretization scheme for the Hamiltonian and the HJB PDE incorporating curvature bounds, allowing efficient solver for estimating numerical solutions to the model. Finally, we illustrate the capability of the proposed curvature-bounded geodesic model in applications of  robot path planning and curvilinear  structures  tracking  from images. Numerical experiments demonstrate that the proposed curvature-bounded geodesic model serves as a powerful and robust tool for finding satisfactory paths.
\end{abstract}

\keywords{Minimal paths, curvature bounds, Hamilton-Jacob-Bellman equation, curvilinear structure tracking, path planning, hamiltonian fast marching method, curvature regularization}



\section{Introduction}
\label{sec_intro}
In recent years, the theoretical foundations and numerical methods for computing minimal paths in a connected domain have become the core to a wide range of scientific problems posed in applied mathematics, engineering and science~\cite{peyre2010geodesic}, due to their strong ability in accommodating mixed geometric, perceptual  and physical priors. The computation of  minimal paths bridges several research areas, including   artificial intelligence, image understanding and PDE numerical analysis, thus exhibiting its fundamental and important role in computational sciences.

Early approaches focus on the computation of minimal paths whose weighted curve length is dependent on the local path position and tangent, which are categorized as first-order geometric features~\cite{cohen1997global,benmansour2011tubular,chen2024region}. Despite successful applications in many research areas, these first-order minimal path models have difficulties in taking into account the path curvature and the associated geometric shape priors~\cite{chen2023geodesic,duits2018optimal,mirebeau2018fast,mashtakov2023time,chen2023computing}. Recent efforts are devoted to minimizing bending energy functionals featuring  different second-order curvature penalizations. Using path curvature as regularization leads to strong flexibility in embedding various geometry priors into the computation of minimal paths, and has succeeded in a variety of practical applications such as image analysis~\cite{mashtakov2017tracking,bekkers2015pde,deng2019new,liu2021color} and robotics motion planning~\cite{kimmel1998multivalued,mirebeau2017automatic,kimmel2001optimal}.

The connection between the minimization of a path energy, defined by an appropriate metric, and the geometric control theory paves the way for computing globally optimal paths between prescribed endpoints~\cite{mirebeau2018fast}.
The foundation of this research line is established over the framework of the first-order static HJB PDE, whose viscosity solution corresponds to the minimum of the path energy and can be numerically approximated via the seminal approach  named Fast-Marching method~\cite{sethian1996fast,sethian1999fast} and its elegant Finslerian variants featuring asymmetry property~\cite{mirebeau2018fast,mirebeau2019hamiltonian,mirebeau2014anisotropic,mirebeau2023massively,sethian2003ordered,sethian2001ordered}. The classical Euler-Mumford Elastica problem, first investigated in physics to describe the deformation of an elastic thin rod, is a typical instance for computing second-order curvature-penalized minimal paths,  whose bending energy involves an integral of squared path curvature. Following the pioneering approach in~\cite{mumford1994elastica}, the Euler-Mumford Elastica model has been developed as an efficient mathematical tool in the fields of computer vision and medical imaging~\cite{chen2017global,tai2011fast,ben2014tangent}. In its basic formulation, the Euler-Mumford Elastica model usually favors smooth minimal paths with homogeneously slow-varying tangents, which, unfortunately, is not always suitable in the realistic applications. 
Indeed, in the application domain of interest, minimal paths are expected to simultaneously maintain the rigid property of curves, and to fit the centerlines of curvilinear structures featuring segments whose curvature is strong and varies quickly. For that purpose, the curvature prior Elastica model~\cite{chen2023computing} was introduced as a generalization of the classical approach~\cite{chen2017global,mirebeau2018fast} to penalize a curvature drift term that measures the deviation of the path curvature from a prescribed feature map,  providing an avenue to take advantage of the bending features of curvilinear structures. However, in its basic formulation, the curvature drift term in essence imposes a soft constraint on the associated minimal paths, whose path curvature may not always agree with the prescribed values, especially when the geometric objects to process have challenging appearance.  This is also the case for the variant of the Reeds-Shepp forward model~\cite{van2024geodesic}, even though the two models~\cite{chen2023computing,van2024geodesic} are distinct from each other in the use of curvature priors and also in the definition of bending energy functionals.

\begin{figure*}[!htbp]
\centering
\includegraphics[width=\linewidth]{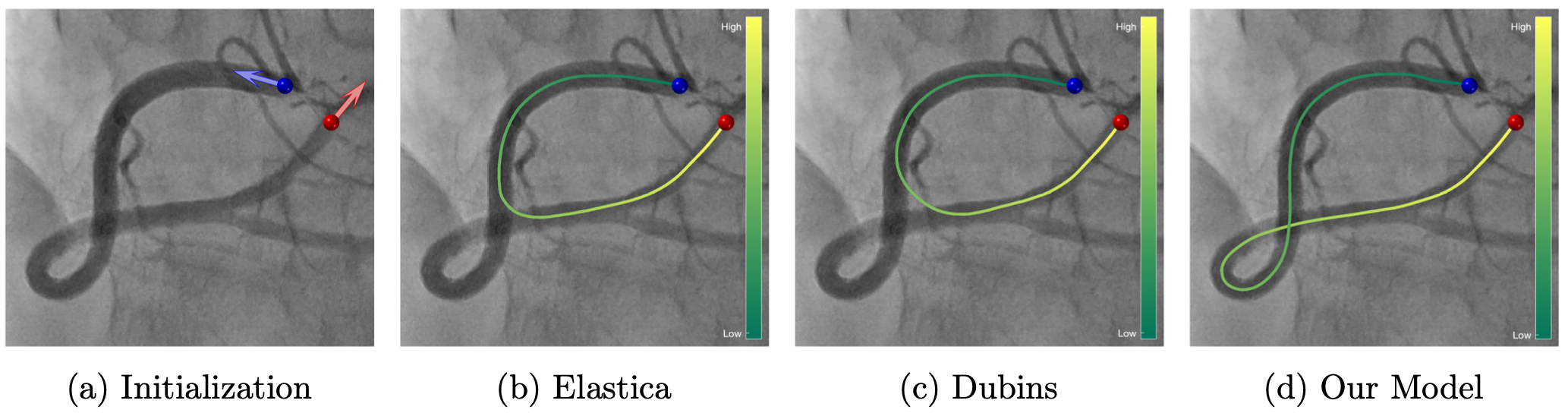}
\caption{Illustration for the advantages of the proposed curvature-bounded geodesic model comparing to state-of-the-art second-order curvature-penalized models. (\textbf{a}) The initialization information over an x-ray heart image, which contains a self-crossing pattern.  The blue and red dots represent the source point and the end point with arrows indicating the path tangents at those points. (\textbf{b})-(\textbf{d}) The lines indicate the physical projection curves of the minimal paths which are respectively derived from the classical Euler-Mumford Elastica model, the Dubins car model and the introduced curvature-constrained Elastica model. The varying color of the lines represent the euclidean curve length of those paths.}
\label{fig_CompXRay}
\end{figure*}

The Dubins model~\cite{mirebeau2018fast} is designed for computing minimal paths with strong curvature penalization, where the magnitude of path curvature is constrained~\cite{boissonnat1994shortest}. However, unlike the Euler-Mumford Elastica model, a key limitation of the Dubins model is its lack of  smoothness property in the yielded minimal paths---an  essential property in many practical  applications. In this article, we propose a variant of the classical Euler-Mumford Elastica model~\cite{chen2017global}, referred to  as the \emph{curvature-bounded model}, which is able impose \emph{arbitrary} lower and upper bounds to directly control the path curvature of the planar component of the computed  minimal paths. 
For that purpose, we define a new Hamiltonian, we study the corresponding control sets, which implicitly embed the limitation of path curvature, and we design numerical schemes for solving the corresponding HJB PDE using the geometric optimization tool of Voronoi's first reduction~\cite{mirebeau2018fast,mirebeau2019riemannian}. This allows us to interpret our curvature-bounded model as a minimum arrival time problem or a minimal action problem by means of the optimal control theory, and also yields an efficient way to find globally optimal curves of bounded curvature between two endpoints with given tangent directions at both points. In particular,  the model we propose here provides a hard constraint to limit the path curvature, thus serving as an alternative way to efficiently implement the curvature prior enhancement for finding satisfactory minimal paths. 
This gives our model a clear advantage over existing minimal path methods, particularly in challenging scenarios, highlighting that the introduction of the curvature-bounded model is both significant and nontrivial.
In Fig.~\ref{fig_CompXRay}, we illustrate the qualitative comparison results of the Euler-Mumford Elastica model~\cite{chen2017global}, the Dubins car model~\cite{mirebeau2018fast} and the introduced bounded elastica model. Those numerical experiments demonstrated in Fig.~\ref{fig_CompXRay} are conducted over an X-ray heart image. The objective is to compute a minimal path such that its physical projection curve can accurately depict the centerline of a blood vessel featuring a self-crossing structure. The initialization for the tested models is illustrated in Fig.~\ref{fig_CompXRay}a, where the blue and red dots respectively denote the source point and the target point with arrows pointing the tangent directions at those points. From this experiment, one  can see that both of the Euler-Mumford elastica model and the Dubins car model, whose minimal paths are respectively shown in Figs.~\ref{fig_CompXRay}b and~\ref{fig_CompXRay}c, fail to track the centerlines of the self-crossing structures. In contrast, the introduced bounded Elastica model blends the benefits of the smoothness property and the enhancement from the arbitrary curvature bounds, thus is able to successfully extract the self-crossing structure, as illustrated in Fig.~\ref{fig_CompXRay}d.

The remainder of this manuscript is organized as follows. We first summarize the introduced bounded Elastica model and the mathematical tools used. Following that we present the main contribution of this work: 
\begin{itemize}
	\item The theoretical results involving the expression of the Hamiltonian of the bounded Elastica model and the corresponding control sets.
	\item The practical applications involving the curvilinear structure tracking and robot motion planning. Eventually, the numerical scheme for tracking accurate minimal paths from a Cartesian grid is presented in the Method section.
\end{itemize}

\section{Results}
\label{sec_res}

\begin{figure*}[!htbp]
\centering
\includegraphics[width=0.8\linewidth]{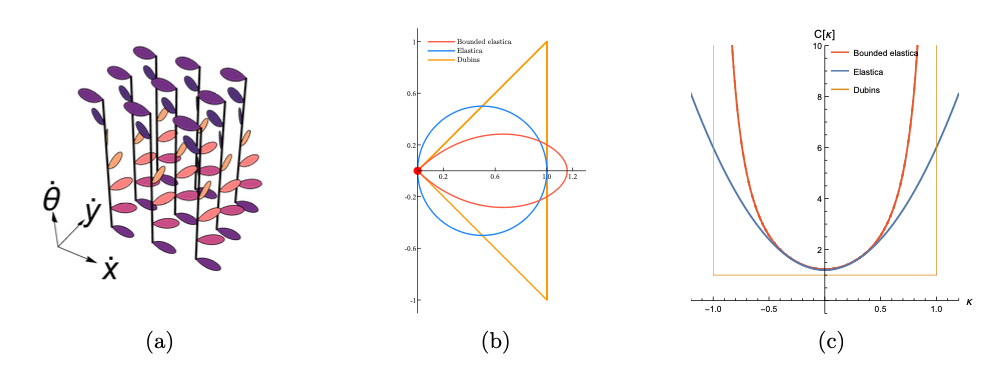}
\caption{
(\textbf{a}) Tissot indicatrix of the proposed bounded Elastica model, i.e.\ illustration of the control set $\cB(\fp)$ of admissible velocities at a collection of points $\fp = (x,y,\theta) \in \bM := \Omega \times \bS^1$ of the domain. (\textbf{b}) Comparison of the control sets of the proposed bounded Elastica model, with the Euler-Mumford Elastica (with adjusted parameters) and Dubins models. (\textbf{c}) Comparison of the curvature penalties. The proposed bounded Elastica model has a smooth and convex curvature penalty, with vertical asymptotes as $\kappa \to 1^-$ and $\kappa \to -1^+$. The curvature penalty of the Euler-Mumford Elastica model has the form $a+b\kappa^2$, for user-defined parameters $a,b>0$. The curvature penalty of the Dubins model is constant if $|\kappa|<1$, and $+\infty$ otherwise.
}
\label{fig_ControlsJMM}
\end{figure*}

\subsection*{Model Overview}
The bounded Elastica model introduced in this paper is designed in such way that the optimal curves are smooth and that their curvature is constrained within arbitrary upper and lower bounds. Similarly to other state-of-the-art curvature-penalized minimal path models~\cite{chen2017global,mirebeau2018fast,duits2018optimal,chen2023computing} based on the HJB PDE framework, the state space underlying the curvature-bounded model is a three-dimensional orientation lifted space, defined as the product $\bM:=\Omega\times\bS^1$ of a two dimensional open-bounded \emph{physical} domain $\Omega\subset\bR^2$, with the \emph{angular} domain $\bS^1:=\bR/2\pi\bZ$ (equivalently, $\bS^1 = [0,2\pi[$ with periodic boundary conditions).
An arbitrary point $\fp=(\kp,\theta)\in\bM$ thus involves two components: $\kp\in\Omega$ the physical position, and $\theta\in\bS^1$ the angular coordinate.
Consider a parametrized curve $\gamma:[0,1]\to\Omega$ in the physical domain $\Omega$, which is twice continuously differentiable and \emph{regular} in the sense that its velocity is non-vanishing. This curve can be lifted to $\wp=(\gamma,\eta):[0,1]\to\bM$  where the function  $\eta:[0,1]\to\bS^1$ characterizes the turning angles of  the \emph{physical projection curve} $\gamma$, namely $\gamma^\prime(t)=\|\gamma^\prime(t)\|\rn(\eta(t)),\,\forall t\in[0,1]$, where $\rn(\theta)=(\cos\theta,\sin\theta)$ is the unit vector associated to an angle $\theta\in\bS^1$. The curvature $\kappa:[0,1]\to\bR$ of the physical projection curve $\gamma$ equals the ratio 
\begin{equation*}
\kappa=\eta^\prime/\|\gamma^\prime\|.
\end{equation*} 
The curvature $\kappa$ of the physical projection curve $\gamma$ is thus expressed in terms of the tangent vector $\wp^\prime = (\gamma^\prime,\eta^\prime)$ to the \emph{orientation-lifted path}.  

The proposed curvature-bounded geodesic model can be formulated as a minimum arrival time optimal control problem in the configuration space $\bM$, in which the velocity of an orientation-lifted path is constrained within suitable control sets $\cB$. 
Each control set $\cB(\fp)$ at a point $\fp\in\bM$ is a \emph{convex} and \emph{compact} subset of the tangent space $\bE:=\bR^2\times\bR$ containing the origin $\mathbf{0}=(0,0,0)$, and depending continuously on $\fp$ w.r.t.\ the Haussdorff distance; the control sets corresponding to our specific model are illustrated in Fig.~\ref{fig_ControlsJMM} and will be explicitly constructed in the next section.
The minimum arrival time problem from a source point $\fs\in\bM$ to a target point $\fp\in\bM$ is defined as 
\begin{align}
\label{eq_MinimalTime}
\cT_\cB(\fs,\fp):=\inf\Bigl\{T>0\,|\,&\exists\wp\in\Lip([0,1],\bM),\nonumber\\
&\wp(0)=\fs,\,\wp(1)=\fp,\nonumber\\
&T^{-1}\wp^\prime(t)\in\cB(\wp(t)),\,\forall t\in[0,1]\Bigr\}
\end{align}
where $\Lip([0,1],\bM)$ denotes the collection of all Lipschitz continuous curves $\wp:[0,1]\to\bM$.  
In~\cref{eq_MinimalTime}, the condition $T^{-1}\wp^\prime\in\cB(\wp)$ defines the  $T\cB$-admissibility of a curve $\wp\in\Lip([0,1],\bM)$. 
If the infimum $T:=\cT_{\cB}(\fs,\fp)$ is finite, then it is attained, i.e.\ there exists a $T\cB$-admissible path $\cG_{\fs,\fp}$ from $\fs$ to $\fp$ referred to as an optimal curve or minimal path, see the literature~\cite{bardi1997Optimal} or~\cite[Appendix B]{chen2017global} on optimal control.

The tangents $T^{-1} \cG_{\fs,\fp}^\prime\in\cB(\cG_{\fs,\fp})$ to the minimal path $\cG_{\fs,\fp}=(\tgamma_{\fs,\fp},\teta_{\fs,\fp})$ contain all the information for computing the path curvature $\tkappa$ of its physical projection curve $\tgamma_{\fs,\fp}$. By designing proper control sets $\cB$, in the following, we are able to constrain the curvature $\tkappa$: 
\begin{equation}
\label{eq_curvatureBounds}
\Im_{\rm min}\bigl(\tgamma_{\fs,\fp}(t),\teta_{\fs,\fp}(t)\bigr)\leq \tkappa(t)\leq \Im_{\rm max}\bigl(\tgamma_{\fs,\fp}(t),\teta_{\fs,\fp}(t)\bigr)
\end{equation}
for any $t\in[0,1]$, where $\Im_{\rm min}$, $\Im_{\rm max}:\bM\to\bR$ are two user-defined scalar-valued continuous functions setting the lower and upper curvature bounds.

We discuss below some of the qualitative properties of the control sets of the proposed curvature-bounded model, anticipating on their precise mathematical definition which is given in the next section, and which involves four parameters here arbitrarily fixed to 
\begin{equation}
\label{eq_default_params}
	\cC=\xi=1 
	\quad \text{and} \quad 
	\psi_{\max} = -\psi_{\min} = \pi/4.
\end{equation}
The \emph{Tissot indicatrix} of the curvature-bounded model, illustrated in \cref{fig_ControlsJMM}a, shows the control sets $\cB(x,y,\theta)$ attached to a collection of regularly spaced points, i.e. the physical position $\kp = (x,y)\in \Omega$ and the angular coordinate $\theta\in \bS^1$ lie on a grid. Each of these control sets is a flat convex shape contained in a two-dimensional half plane $\bE_\theta$ of the tangent space $\bE := \bR^2 \times \bR$:
\begin{equation}
\label{eq_HalfPlane}
	\bE_\theta := \{ (\dot \nu \rn(\theta), \dot \theta) \mid \dot \nu \geq 0,\ \dot \theta \in \bR\},
\end{equation} 
where we recall that $\rn(\theta) := (\cos\theta,\sin\theta)$.
Our minimal path model thus describes a non-holonomic vehicle whose physical velocity $\dot{\nu}\rn(\theta)$ is always positively collinear with the heading direction $\rn(\theta)$, and is thus never directed backwards or sideways. 
Since $\cB(\fp) \subset \bE_\theta$, there is no loss of information in considering the following sliced control sets
\begin{equation}
\label{eq_BECS2D}
\tilde\cB(\fp)=\left\{(\dot\nu,\dtheta)\in\bR^2~|~\dot\nu\geq0,(\dot\nu\rn(\theta),\dtheta)\in\cB(\fp)\right\},
\end{equation}
as shown  in~\cref{fig_ControlsJMM}b, which is independent of $\fp$ under the assumption in~\cref{eq_default_params}.
Similarly to the Euler-Mumford Elastica model, the control sets of the proposed curvature-bounded model have a smooth (except at the origin)
 and strictly convex boundary, leading to smooth minimal paths; in contrast, the triangle-shaped control set of the Dubins model leads to piecewise smooth paths, which are known to be concatenations of straight segments and circular arcs. 
Similarly to the Dubins model, the boundary of the control sets of the curvature-bounded model has an angle at the origin, leading to upper and lower bounds on the admissible path curvature, see~\cref{prop_BoundedCurvature} for a detailed argument; in contrast, the Euler-Mumford Elastica model admits a vertical tangent at the origin, indicating that the curvature is not bounded a priori. 

Minimal paths for the proposed curvature-bounded model, the Euler-Mumford Elastica model and the Dubins model, can be investigated in the curvature penalization framework established in~\cite{mirebeau2018fast}.
A path $\wp$ from $\fp = (\kp,\theta)$ to $\fq = (\kq,\phi)$ is optimal iff the physical projection curve $\gamma : [0,L] \to \bR^2$, parametrized at unit Euclidean speed and whose curvature is denoted $\kappa : [0,L] \to \bR$, minimizes the second-order energy functional
\begin{equation*}
\int_0^L  \kC(\kappa(l)) dl,\ \text{subject to}\
\begin{cases}
\gamma(0) = \fp,\ \gamma^\prime(0) = \rn(\theta),\\
\gamma(L)=\fq,\ \gamma^\prime(L)=\rn(\phi).
\end{cases}
\end{equation*}
The curvature dependent cost function $\kC : \bR \to ]0,\infty]$ is related to the control sets via the equality
\begin{equation}
\label{eq_CurvCost}
\kC(\dot \theta/\dot \nu) = 1/\dot\nu \quad \text{for each~} (\dot \nu, \dot \theta) \in \partial \tilde \cB(\fp),
\end{equation}
with $\dot\nu>0$. 
The curvature cost function $\kC$ associated to the Euler-Mumford Elastica model is a parabola, whereas for the Dubins model it equals $1$ on the interval $[-1,1]$ and $+\infty$ elsewhere. Unfortunately, the curvature cost function $\kC$ associated with the curvature-bounded model does not have a closed form algebraic expression to our knowledge, but we can nevertheless compute it numerically using~\cref{eq_CurvCost}, see~\cref{fig_ControlsJMM}c. Similarly to the Euler-Mumford Elastica  model, it is smooth and approximately parabolic close to the origin; similarly to the Dubins model it equals $+\infty$ outside the interval $[-1,1]$.


Finally, let us mention that a much wider range of control set shapes and minimal path behaviors can be achieved by lifting the constraint of \cref{eq_curvatureBounds}, and letting the parameters $\cC$, $\xi$, $\psi_{\max}$, $\psi_{\min}$ depend on the current point $\fp$. One may for instance favor paths in going through some regions by decreasing the cost $\cC$ there, one may impose specific upper or lower curvature bounds via $\psi_{\max}$ and $\psi_{\min}$, and one may alter the curvature cost function through $\xi$. 

\subsection*{Hamiltonian and control sets of the proposed curvature-bounded model}
The control sets $\cB$ can be equivalently described in terms of a Hamiltonian $\cH:\bM\times\bE^*\to[0,\infty[$, where the co-tangent space $\bE^*$ is the dual to the tangent vector space $\bE = \bR^2 \times \bR$.
On the one hand, one can define
\begin{equation}
\label{eq_HamAsSup}
	\sqrt{2\cH(\fp,\hfp)} = \max\big\{ \langle\hat \fp,\dfp\rangle \mid \dot \fp \in \cB(\fp)\big\},
\end{equation}
where $\langle\dfp,\hfp \rangle$ denotes the standard Euclidean scalar product of $\dfp$ and $\hfp$.
The Hamiltonian $\cH(\fp,\hfp)$ is a continuous function of point $\fp \in \bM$ and co-vector $\hat \fp \in \bE^*$,  convex and positively two-homogeneous w.r.t. the second variable $\hat \fp$. Conversely, the control sets can be recovered as follows from a Hamiltonian obeying those properties:
\begin{equation}
\label{eq_BEControlSet}
\cB(\fp):=\overline{\mathrm{co}}\left\{\frac{\partial}{\partial\hfp}\sqrt{2\cH(\fp,\hfp)}\in\bE,\,\forall \hfp\in\bE^*\backslash\{\mathbf{0}\} \right\},
\end{equation}
where $\overline{\mathrm{co}}(A)$ denotes the closed convex envelope of a set $A \subset \bE$, and the points $\hat \fp$ where $\sqrt{2 \cH(\fp,\cdot)}$ is not differentiable are implicitly omitted from Eq.~\cref{eq_BEControlSet}. 
Therefore, instead of giving the explicit expression of the control sets $\cB$, we alternatively focus on the design of a Hamiltonian $\cH$ that enforces the desired curvature constraints, whose construction constitutes the main theoretical originality of this work. 

Let $\cC : \bM \to ]0,\infty[$ be a positive cost function characterizing the geometric features of the target curvilinear structures and let us define by $\Re(\theta,\psi)\in\bE$  a control vector with respect to a pair of angles $(\theta,\,\psi)$, reading as
\begin{equation}
\label{eq_ControlVector}
\Re(\theta,\psi)=\left(\cos(\psi)\rn(\theta),\xi^{-1}\sin(\psi)\right),
\end{equation}
where $\xi>0$ is a constant.
For any point $\fp=(\kp,\theta)\in\bM$ and any co-vector $\hfp=(\hkp,\htheta)\in\bE^*$,  we define the Hamiltonian $\cH$ of the proposed curvature-bounded model as follows
\begin{equation}
\label{eq_BEHamiltonian}
\cH(\fp,\hfp):=\frac{3}{8}\cC(\fp)^{-2}\,\int_{\psi_{\rm min}(\fp)}^{\psi_{\rm max}(\fp)}\,\langle\hfp, \Re(\theta,\psi)\rangle_+^2\,w_\fp(\psi)\,d\psi
\end{equation}
where $\langle\hfp,\dfp\rangle_+:=\max\{0,\langle\hfp,\dfp\rangle\}$ is the positive part of the standard Euclidean scalar product, and  $\psi_{\rm min},\,\psi_{\rm max}:\bM\to[-\pi/2,\pi/2]$ are scalar-valued functions computed from the curvature bounds $\Im_{\rm min} < \Im_{\rm max}$ in such way that 
\begin{align*}
&\psi_{\rm min}(\fp):=\arctan\bigl(\xi\Im_{\rm min}(\fp)\bigr),\\
&\psi_{\rm max}(\fp):=\arctan\bigl(\xi\Im_{\rm max}(\fp)\bigr).
\end{align*}
In addition, the non-negative weight $w_\fp(\psi)\geq 0$ in the integral is defined for each angle $\psi \in [\psi_{\min}(\fp),\psi_{\max}(\fp)]$ as
\begin{equation}
\label{eq_VarFejWeights}
w_\fp(\psi):=\frac{1}{\lambda_\fp}\cos\left(\frac{\psi-\Psi_\fp}{\lambda_\fp}\right)
\end{equation}
where the functions $\Psi_\fp$ and $\lambda_\fp$ are respectively defined as
\begin{align*}
&\Psi_\fp:=\frac{1}{2}\bigl(\psi_{\rm max}(\fp)+\psi_{\rm min}(\fp)\bigr), \\
&\lambda_\fp:=\frac{1}{\pi}\bigl(\psi_{\rm max}(\fp)-\psi_{\rm min}(\fp)\bigr).
\end{align*}
By those definitions we note that $\Psi_\fp$ is the average  of the values $\psi_{\rm max}(\fp)$ and $\psi_{\rm min}(\fp)$, and $\lambda_\fp$ is a scaling factor. 

The definition of the Hamiltonian in~\cref{eq_BEHamiltonian} via an integral is not very common in applications \cite{bardi1997Optimal,mirebeau2018fast}.
Typically, one either provides an explicit algebraic expression or  employs a supremum form analogous to that in~\cref{eq_HamAsSup}. The integral form~\cref{eq_BEHamiltonian} here is motivated by (i) its simple and accurate numerical implementation using Fejer quadrature as described in the Methods section, and (ii) its connection to the Euler-Mumford Elastica model (see~\cref{eq_ElasticaHam} below).

\begin{figure*}[!htbp]
\centering
\includegraphics[width=0.9\linewidth]{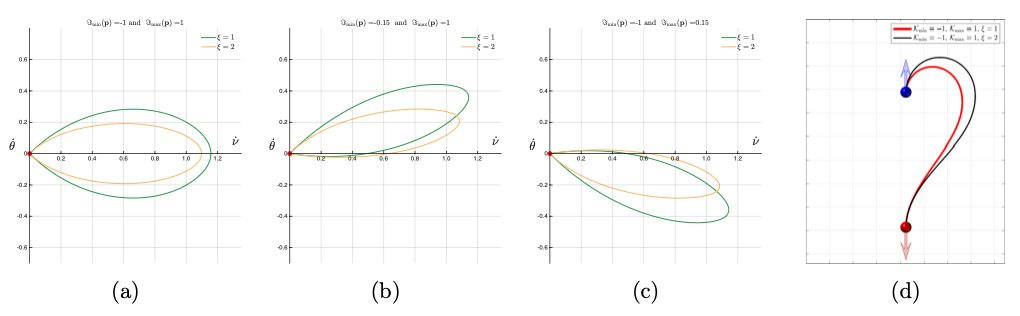}
\caption{Illustration for the plots of the boundaries $\partial\tilde{\cB}(\fp)$ of the proposed curvature-bounded model and the corresponding minimal paths. (\textbf{a})-(\textbf{c}): visualization for the boundaries $\partial\tilde{\cB}(\fp)$ with respect to different values of the parameter $\xi$.  The values of the curvature bounds $\Im_{\rm min}$ and $\Im_{\rm max}$ are shown at the top of each column. The red dots indicate the origin $(0,0)$ of the set $\tilde{\cB}(\fp)$. (\textbf{d}): the lines represent the physical  projections of orientation-lifted minimal paths with bounded curvature. The blue and red dots are respectively the source point and endpoint with arrows indicating the tangent directions assigned to the these points.}
\label{fig_ControlSets_Xi}
\end{figure*}

Differentiating the Hamiltonian expressed in~\cref{eq_BEHamiltonian} w.r.t. the co-vector $\hat \fp \in \bE^*$, we obtain at any point $\fp = (\kp,\theta) \in \bM$
\begin{equation*}
	\frac{\partial\cH}{\partial \hat \fp}(\fp,\hat \fp) = \frac 3 {4\cC(\fp)^2} 
	\int_{\psi_{\rm min}(\fp)}^{\psi_{\rm max}(\fp)}\langle\hfp, \Re(\theta,\psi)\rangle_+ \Re(\theta,\psi) w_\fp(\psi)d\psi.
\end{equation*} 
By construction, the control vectors $\Re(\theta,\psi)$ defined in~\cref{eq_ControlVector} lie in the half plane $\bE_\theta$ defined by~\cref{eq_HalfPlane}. Therefore $\frac{\partial\cH}{\partial \hat \fp}(\fp,\hat \fp) \in \bE_\theta$, and thus $\cB(\fp) \subset \bE_\theta$ in view of \cref{eq_BEControlSet}, which fits with the discussion of the previous section and makes \cref{eq_BECS2D} well defined. This observation holds for arbitrary parameters $\cC(\fp)>0$, $\xi(\fp)>0$, $-\pi/2 < \psi_{\min}(\fp) < \psi_{\max}(\fp)<\pi/2$. 

Eventually, we  present the following result on the path curvature of any $T\cB$-admissible curve.
\begin{proposition}
\label{prop_BoundedCurvature}[Boundedness of Path Curvature]
Consider a $T \cB$-admissible curve $\wp=(\gamma,\eta)\in\Lip([0,1],\bM)$, w.r.t.\ the control sets $\cB$ defined in~\cref{eq_BEControlSet}.
Then the path curvature $\kappa$ of the physical projection curve $\gamma$ satisfies 
\begin{equation*}
\Im_{\rm min}(\wp(t))\leq\kappa(t)\leq \Im_{\rm max}(\wp(t)),\quad\forall t\in[0,1].
\end{equation*}
\end{proposition}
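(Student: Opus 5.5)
The plan is to show that the control set $\cB(\fp)$ lies inside a convex cone of $\bE_\theta$ whose boundary rays are spanned by $\Re(\theta,\psi_{\min}(\fp))$ and $\Re(\theta,\psi_{\max}(\fp))$. The curvature bounds will then follow by reading off the ratio $\dot\theta/\dot\nu$ of an admissible velocity.

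First, I describe the generators of $\cB(\fp)$ in \cref{eq_BEControlSet}. For $\hfp\neq\mathbf 0$ at which $\cH(\fp,\hfp)>0$, the chain rule gives
\begin{equation*}
\frac{\partial}{\partial\hfp}\sqrt{2\cH(\fp,\hfp)}=\frac{1}{\sqrt{2\cH(\fp,\hfp)}}\,\frac{\partial\cH}{\partial\hfp}(\fp,\hfp).
\end{equation*}
The derivative formula displayed just before the proposition then shows that this vector is a nonnegative combination, namely an integral with nonnegative weights $\langle\hfp,\Re(\theta,\psi)\rangle_+\,w_\fp(\psi)$, of the vectors $\Re(\theta,\psi)$ for $\psi\in[\psi_{\min}(\fp),\psi_{\max}(\fp)]$. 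Here I use that $w_\fp\ge0$, since $|\psi-\Psi_\fp|/\lambda_\fp\le\pi/2$ on this interval. Where $\cH=0$, the function $\sqrt{2\cH}$ is either non-differentiable or has zero gradient; these points are omitted or contribute the origin, which lies in the cone anyway. Hence every generator lies in the closed convex cone
\begin{equation*}
K_\fp:=\Bigl\{(\dot\nu\,\rn(\theta),\dot\theta)\ \Bigm|\ \dot\nu\ge0,\ \xi^{-1}\tan\psi_{\min}(\fp)\,\dot\nu\le\dot\theta\le\xi^{-1}\tan\psi_{\max}(\fp)\,\dot\nu\Bigr\}.
\end{equation*}
This holds because $\Re(\theta,\psi)=\cos\psi\,(\rn(\theta),\xi^{-1}\tan\psi)$ with $\cos\psi>0$, and $\tan$ is increasing on $]-\pi/2,\pi/2[$. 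By definition $\xi^{-1}\tan\psi_{\min}=\Im_{\min}$ and $\xi^{-1}\tan\psi_{\max}=\Im_{\max}$. Since $K_\fp$ is closed and convex, its closed convex hull is itself, so $\cB(\fp)\subset K_\fp$.

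Next, I apply this to a $T\cB$-admissible curve. From $T^{-1}\wp'(t)\in\cB(\wp(t))\subset K_{\wp(t)}$ we get $\gamma'(t)=\dot\nu(t)\rn(\eta(t))$ with $\dot\nu\ge0$ and $\eta'(t)=\dot\theta(t)$, together with
\begin{equation*}
\Im_{\min}(\wp(t))\,\|\gamma'(t)\|\le\eta'(t)\le\Im_{\max}(\wp(t))\,\|\gamma'(t)\|.
\end{equation*}
Dividing by $\|\gamma'(t)\|$ and using $\kappa=\eta'/\|\gamma'\|$ gives the claim.

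The main obstacle is regularity. The curve is only Lipschitz, so the relation holds only for almost every $t$, and $\kappa$ is only defined where $\gamma'\neq0$. If $\gamma'(t)=0$, the cone forces $\eta'(t)=0$, so the curve does not move at all at that instant. I would handle this by interpreting the statement at the points where $\kappa$ is defined, i.e.\ almost every $t$ with $\gamma'(t)\neq0$; this is the natural reading for Lipschitz paths. Alternatively, after reparametrizing by arc length, the inequality holds almost everywhere, and it extends to every $t$ when $\gamma$ is $C^2$ by the continuity of $\Im_{\min}$ and $\Im_{\max}$. A secondary point to check is the differentiability convention in \cref{eq_BEControlSet}. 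The inclusion argument only requires that every gradient which is actually included lies in $K_\fp$, so it is unaffected by which non-differentiable points are omitted.
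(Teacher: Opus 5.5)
Your proof is correct and follows essentially the same route as the paper. Both arguments place the gradients (via the integral formula and $w_\fp\ge 0$) in the closed convex cone spanned by $\Re(\theta,\psi_{\min}(\fp))$ and $\Re(\theta,\psi_{\max}(\fp))$, conclude that $\cB(\fp)$ lies in this cone, and read off $\kappa=\eta'/\|\gamma'\|$. Your description of the cone by linear inequalities, the explicit chain rule for $\sqrt{2\cH}$ including the points where $\cH=0$, and the almost-everywhere caveat are details the paper leaves implicit, so they are fine refinements.
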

\begin{proof}
Define for each point $\fp = (\kp,\theta) \in \bM$ the following subset $\bE_\fp \subset \bE$ of the tangent space: 
\begin{equation*}
	\bE_\fp := \{ \lambda \Re(\theta,\psi)\mid \lambda \geq 0,\ \psi_{\min}(\fp) \leq \psi \leq \psi_{\max}(\fp)\}.
\end{equation*}
It is not hard to see that $\bE_\fp$ is a closed and convex two-dimensional cone, generated by two extremal vectors:
\begin{equation*}
\bE_\fp := \{ \alpha \Re(\theta,\psi_{\min}(\fp)) + \beta \Re(\theta,\psi_{\max}(\fp)) \mid \alpha,\beta \geq 0\}.
\end{equation*}
One has $\frac{\partial\cH}{\partial \hat \fp}(\fp,\hat \fp)\in \bE_\fp$, in view of the integral expression of this gradient and by convexity of $\bE_\fp$, and therefore $\cB(\fp) \subset \bE_\fp$ in view of \cref{eq_BEControlSet}.
As a result, for any $T\cB$-admissible curve $\wp=(\gamma,\eta) : [0,1] \to \bM$ and for a.e.\ $t \in [0,1]$ one has
\begin{equation*}	
\wp^\prime(t)\propto \Re\bigl(\eta(t),\varphi(t)\bigr),
\end{equation*}
for some angle $\varphi(t)$ such that $\psi_{\min}(\wp(t)) \leq \varphi(t) \leq \psi_{\max}(\wp(t))$. The curvature $\kappa$ of the path $\gamma$ is thus formulated as
\begin{equation*}
\kappa(t)=\frac{\eta^\prime(t)}{\|\gamma^\prime(t)\|}=\xi^{-1}\tan(\varphi(t))\in\bigl[\Im_{\rm min}(\wp(t)),\Im_{\rm max}(\wp(t))\bigr]
\end{equation*}
which concludes the proof.
\end{proof}

\begin{figure*}[!htbp]
\centering
\includegraphics[width=\linewidth]{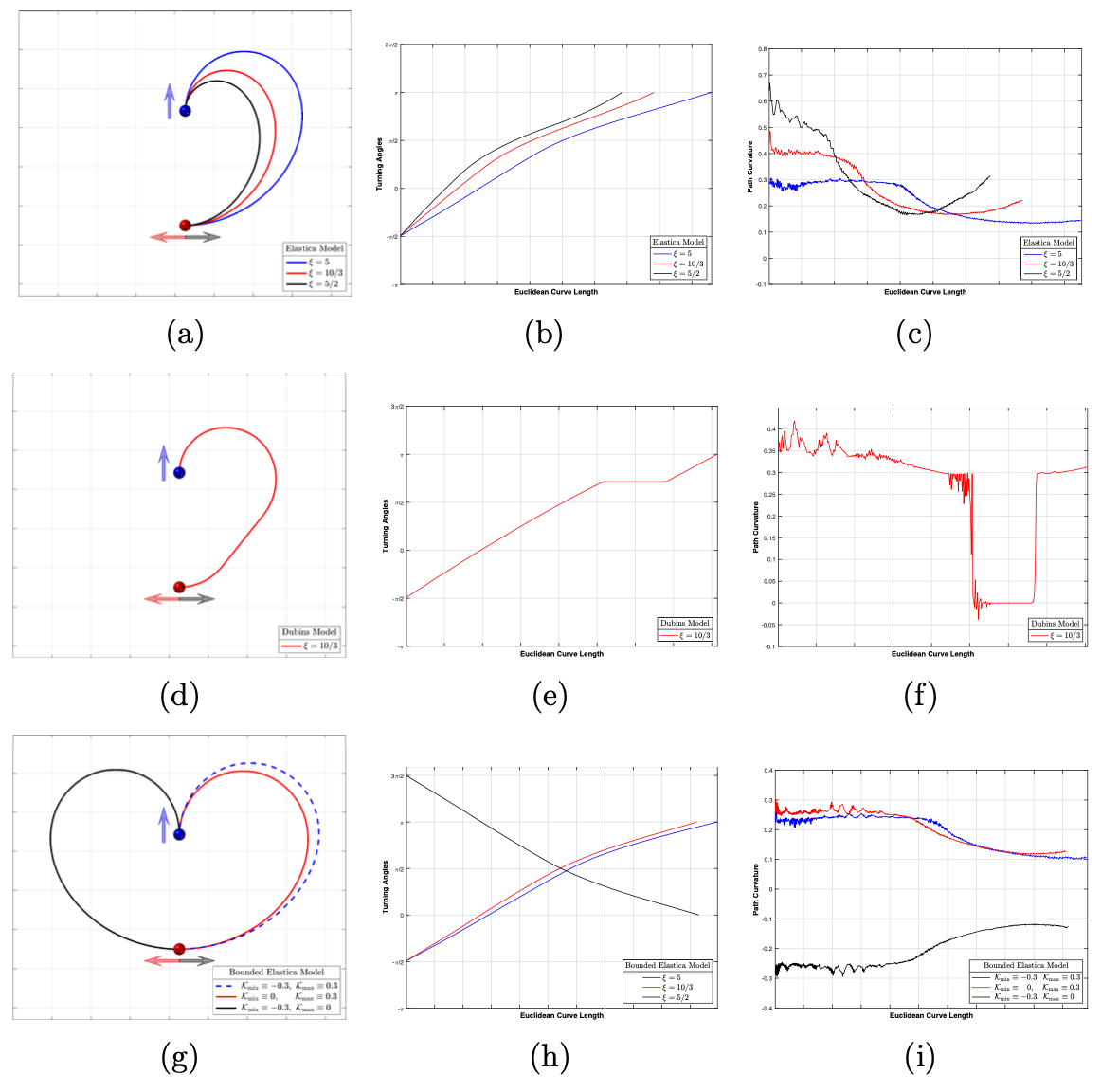}
\caption{Illustration of different curvature-penalized models on controlling the path curvature. \textbf{Column 1}:  Visualization for physical projection curves of the minimal paths associated to different models. The blue and red dots respectively indicate the source point and the target point, and the arrows indicate the path tangent directions assigned to these points.  \textbf{Columns 2-3}: The line plots of the turning angles and path curvature of the physical projection curves as functions of Euclidean curve length.}
\label{fig_AdvContrCurvature}
\end{figure*}

\subsection*{curvature-bounded Model is a Generalization of the Euler-Mumford Elastica Model} 
In this section, we reveal the tight relationship between the proposed curvature-bounded model and the Euler-Mumford Elastica model~\cite{chen2017global,mirebeau2018fast}, by comparing their respective Hamiltonians.
For that purpose, let us first reformulate the Hamiltonian $\cH$ of the curvature-bounded model, using the linear change of variables $\varphi = (\psi-\Psi_\fp)/\lambda_\fp$ in the defining integral of \cref{eq_BEHamiltonian}: 
\begin{equation}
\label{eq_EquivBEHamiltonian}
\cH(\fp,\hfp)=\frac{3}{8}\cC(\fp)^{-2}\int_{-\pi/2}^{\pi/2}\langle\hfp, \Re(\theta,\lambda_\fp \varphi+\Psi_\fp)\rangle_+^2\cos(\varphi)d\varphi
\end{equation}
for any point $\fp=(\kp,\theta)\in\bM$ and any co-vector $\hfp=(\hkp,\htheta)\in\bE^*$. Note that the weight $w_\fp$ in~\cref{eq_VarFejWeights} simplifies here to $\cos\varphi$. The control vectors in~\cref{eq_EquivBEHamiltonian} read
\begin{equation*}
\Re(\theta,\lambda_\fp \varphi+\Psi_\fp)=(\rn(\theta)\cos(\lambda_\fp \varphi+\Psi_\fp),\xi^{-1}\sin(\lambda_\fp\varphi+\Psi_\fp))
\end{equation*}
and therefore depend on the terms $\Psi_\fp$ and $\lambda_\fp$ related to the curvature bounds.
On the other hand, the Hamiltonian $\kH$ associated with the Euler-Mumford Elastica model can be formulated as follows~\cite{mirebeau2018fast} 
\begin{equation}
\label{eq_ElasticaHam}
\kH(\fp,\hfp)=\frac{3}{8}\cC(\fp)^{-2}\int_{-\pi/2}^{\pi/2}	\left\langle\hfp,\Re(\theta,\varphi) \right\rangle_+^2\cos\varphi\, d\varphi.
\end{equation}

The difference between the Hamiltonian $\cH$ of the proposed curvature-bounded model, and the original one $\kH$, thus solely lies in the translation and scaling $\lambda_\fp \varphi + \Psi_\fp$ of the angular parameter of the control vectors, introduced in this work to enforce curvature bounds.
In the special case where the angles $\psi_{\rm min},\,\psi_{\rm max}$ in the Hamiltonian $\cH$ are respectively set as $\psi_{\rm min}\equiv -\pi/2$ and $\psi_{\rm max}\equiv\pi/2$, the lower and upper curvature bounds become infinite $\Im_{\rm min}\equiv-\infty,\,\Im_{\rm max}\equiv\infty$, and the angular parameters of the control vectors remain untouched $\lambda_\fp=1,\,\Psi_\fp=0,\,\forall\fp\in\bM$. Under these circumstances, our model thus coincides with the standard Euler-Mumford Elastica model. 

Let  $\kB(\fp)$ be the control set of the Euler-Mumford Elastica model at a point $\fp = (\kp,\theta) \in \bM$. By a reasoning similar to the proposed curvature-bounded model, one has $\kB(\fp) \subset \bE_\theta$, and one may therefore define the sliced control sets
\begin{equation*}
\tilde\kB(\fp)=\left\{(\dot\nu,\dtheta)\in\bR^2~|~\dot\nu\geq0,(\dot\nu\rn(\theta),\dtheta)\in\kB(\fp)\right\}.
\end{equation*}
It is known~\cite{chen2017global,mirebeau2018fast} that $\tilde\kB(\fp)$ is an \emph{ellipse} (and more precisely a \emph{disk} in the special case where $\xi=1$), whose center lies on the $\dot \nu$ axis, and whose boundary is tangent to the origin.
\subsection*{On the parameter $\xi$ in the Curvature-bounded Model}
As defined in~\cref{eq_ControlVector} and~\cref{eq_BEHamiltonian}, the curvature-bounded model involves a parameter $\xi>0$ that modulates both the shape of the control sets and the relative importance of the curvature of the corresponding minimal paths. In Fig.~\ref{fig_ControlSets_Xi}, the boundaries of the sliced control sets $\tilde\cB(\fp)$ associated with the curvature-bounded model are illustrated, under varying values of parameter $\xi$ and of curvature bounds $\Im_{\rm min},\,\Im_{\rm max}$. Fig.~\ref{fig_ControlSets_Xi}d depicts a qualitative comparison of minimal paths computed using the curvature-bounded model. Both  minimal paths are generated using  the sliced control sets shown in Fig.~\ref{fig_ControlSets_Xi}a, corresponding to curvature bounds $\Im_{\rm min}\equiv-1$ and $\Im_{\rm max}\equiv1$. More precisely, the red line is computed using the curvature-bounded model with  $\xi=1$, and the black line with $\xi=2$. Similar to the its effect in the  Euler-Mumford Elastica model~\cite{chen2017global,mirebeau2018fast}, the parameter $\xi$ in the curvature-bounded model controls the smoothness of paths: high values of $\xi$ are expected to produce minimal paths whose physical projection curves have low maximum  curvature. Note that the increased curvature penalty associated with high values of $\xi$ is independent of curvature bounds. 
Similarly, in Fig.~\ref{fig_ControlSets_Xi}d, the red line exhibits a lower minimum turning radius (i.e. higher maximum path curvature) compared to the black line.

\subsection*{Significance in Controlling the Boundedness of the Path Curvature}
Given two endpoints as well as the angles that corresponds to the path tangents at these points, a major advantage of our curvature-bounded model,  over existing curvature-penalized models, lies at its strong ability in controlling the path curvature of the computed minimal paths, since it imposes a hard constraint on the curvature via arbitrary lower and upper bounds, as formulated in~\cref{eq_curvatureBounds}.  In contrast, the classical Euler-Mumford Elastica model~\cite{chen2017global,mirebeau2018fast} takes into account a squared curvature penalty as the model regularization, whereas the Dubins car model~\cite{mirebeau2018fast} is designed to search for minimal paths, where the \emph{absolute curvature} of the physical projection curves is bounded by a positively-defined scalar-valued function. This explicitly implies that neither the Euler-Mumford Elastica model nor the Dubins car model can limit the path curvature of the corresponding minimal paths to satisfy mandatory \emph{arbitrary bounds}. Moreover, the minimal paths of the Dubins model are non-smooth and frequently saturate the curvature bounds.

In Fig.~\ref{fig_AdvContrCurvature}, we conduct numerical experiments to exhibit the advantages of the curvature-bounded model in manipulating the curvature (of the physical projection curves) of the minimal paths, when comparing to the Euler-Mumford Elastica model and to the Dubins model. In this experiment, each test is set up with a single source point $\fs\in\bM$ and two target points $\fx_j$ for $j\in\{1,2\}$, which yields two minimal paths $\cG_{\fs,\fx_j}=(\tgamma_{\fs,\fx_j},\teta_{\fs,\fx_j})$ sharing the same source point $\fs$. We choose a point $\fx^*$ corresponding to lower minimum arrival time $\cT_{\cB}$, see~\cref{eq_MinimalTime}, i.e
\begin{equation}
\cT_{\cB}(\fs,\fx^*)=\min\{\cT_{\cB}(\fs,\fx_1),\cT_{\cB}(\fs,\fx_2)\}.
\end{equation}	
For the sake of simplicity, we denote by $\cG_*=(\tgamma_*,\teta_*)$ the chosen minimal path, where the physical projection curve, the turning angles and path curvature of the physical projection curve are respectively denoted by $\tgamma_*$, $\teta_*$ and $\tilde\kappa_*$. Furthermore, the cost $\cC$, as formulated in~\cref{eq_BEHamiltonian} and~\cref{eq_ElasticaHam}, for the curvature-penalized models considered are fixed as $\cC\equiv1$.

The left column of Fig.~\ref{fig_AdvContrCurvature} illustrates the physical projection curves $\tgamma_*$ derived from curvature-penalized models with respect to different parameters which affect the path curvature, the middle column draws the respective turning angles $\teta_*$ which is regarded as a function of the Euclidean curve length, and the right column illustrates the line plots of the path curvature of the physical projection $\tgamma_*$. More specifically, Fig.~\ref{fig_AdvContrCurvature}a illustrates the physical projection curves of the Euler-Mumford Elastica minimal paths, whose turning angles feature a slowly-varying property,  as demonstrated in Fig.~\ref{fig_AdvContrCurvature}b. It follows that the Euler-Mumford Elastica model encourages the minimal paths to be smooth. 
Analogous to the  Euler-Mumford Elastica model, the characteristics of the smoothness property can also be observed in Figs.~\ref{fig_AdvContrCurvature}g and~\ref{fig_AdvContrCurvature}h, where the physical projection curves and their turning angles are produced through the proposed curvature-bounded model.

\begin{figure*}[!htbp]
\centering
\includegraphics[width=0.99\linewidth]{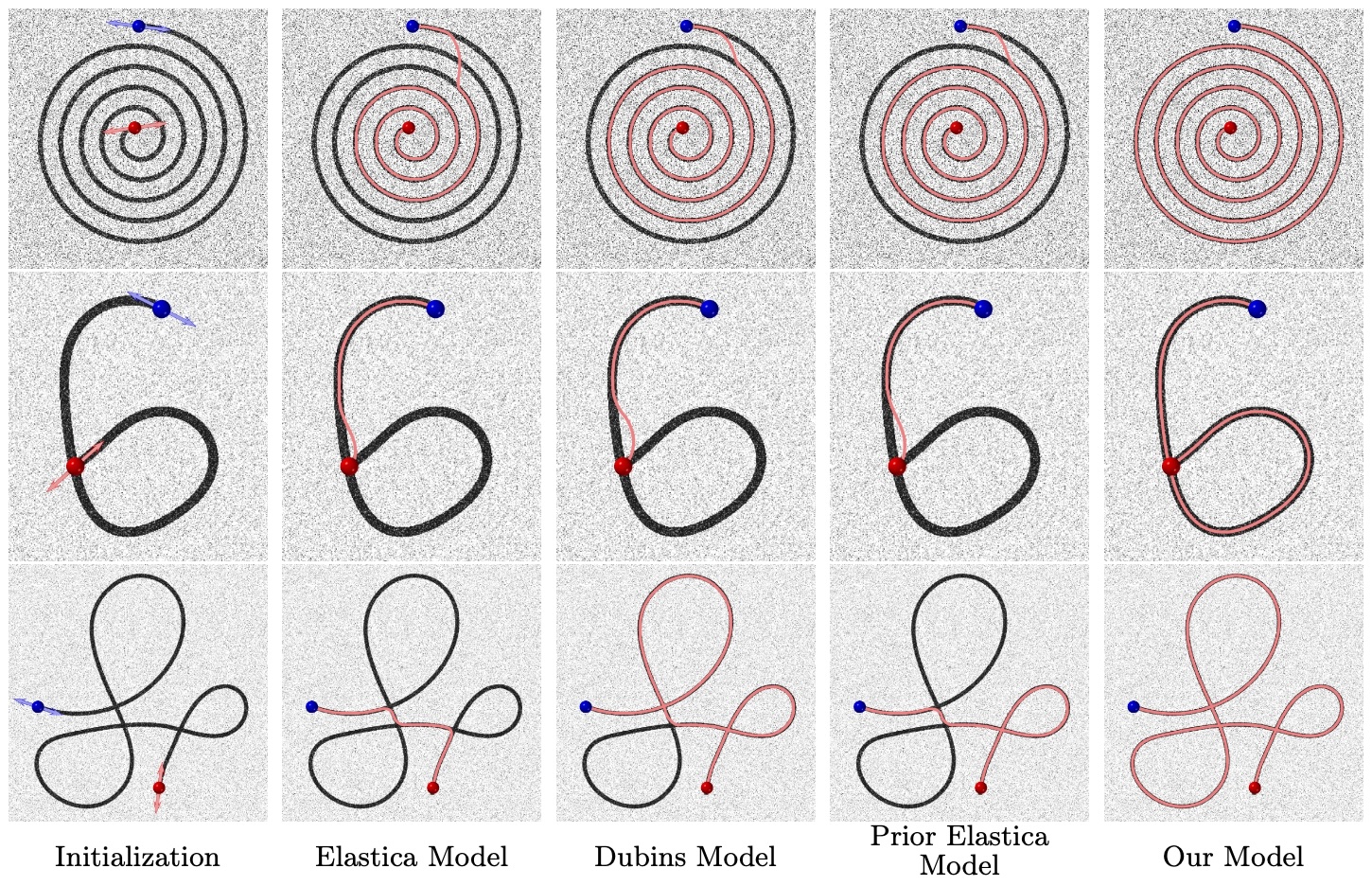}
\caption{Qualitative comparisons on synthetic images containing curvilinear structures of varying shapes. In column $1$, the red and blue dots respectively mark the physical positions of the source and target points for initializing the minimal path models, with  arrows indicating their assigned tangent vectors.  Columns $2$ to $5$ show the physical projection curves of minimal paths computed from the Euler-Mumford Elastica model, the Dubins model, the curvature prior Elastica model and the proposed curvature-bounded geodesic model, respectively .}
\label{fig_SynComp}
\end{figure*}
 
 
\subsection*{Curvilinear Structure Tracking}
\label{subsec_VesselTracking}
Finding curvilinear structure centerlines from images of various modalities is a challenging problem, due to the  presence of complex geometry appearance such as branched structures and rapidly-varying background content.  
Those centerlines  can be naturally modeled as  minimal paths which are solutions to a global optimization problem integrating data-driven cost with curve regularization~\cite{liao2022progressive,liao2023segmentation,chen2019minimal,pechaud2009extraction,li2007vessels}. In particular, the proposed curvature-bounded model takes both curvature penalization and curvature bounds as regularization, and is suitable for tracking curvilinear structure centerlines, since the curvature bounds can be efficiently estimated from the image data and can be naturally used as strong geometric priors to enhance the results. In particular, this  application works in conjunction with the estimation of curvature bounds from image data. 

In this article we focus on the extraction of curvilinear structure centerline,  providing that its two endpoints and  their respective tangent directions are given. The solution to such a task typically involves two major ingredients that should be estimated from the image data. Specifically, the first ingredient is the cost function $\cC$ as utilized in~\cref{eq_EquivBEHamiltonian}. It characterizes the appearance of the curvilinear structures and can be computed as a decreasing function of the orientation scores~\cite{chen2017global,chen2023computing}. In principle, the value of the cost $\cC(\fp)$ is supposed to be low at a given point $\fp=(\kp,\theta)\in\bM$,  provided that the physical position $\kp$ is inside a curvilinear structure and simultaneously the direction $(\cos\theta,\sin\theta)$ is nearly collinear to the centerline tangent at a position close to $\kp$.	
The second ingredient lies at the construction of the curvature bounds $\Im_{\rm min}$ and $\Im_{\rm max}$ using the image data. As introduced in~\cite{chen2023computing}, one can estimate the path curvature, which are referred to as curvature priors,  of computed curvilinear centerline segments as prescribed geometric features.
The curvature priors are encoded in a scalar-valued function $\varpi:\bM\to\bR$, by which  the curvature bounds $\Im_{\rm min}$ and $\Im_{\rm max}$ can be naturally constructed as follows:
\begin{equation}
\label{eq_CurvatureBounds}
\Im_{\rm max}(\fp):=\varpi(\fp)+\varsigma/2\quad\text{and}\quad\Im_{\rm min}(\fp):=\varpi(\fp)-\varsigma/2,
\end{equation}
where $\varsigma>0$ is a constant.

\begin{figure*}[!htbp]
\centering
\includegraphics[width=0.99\linewidth]{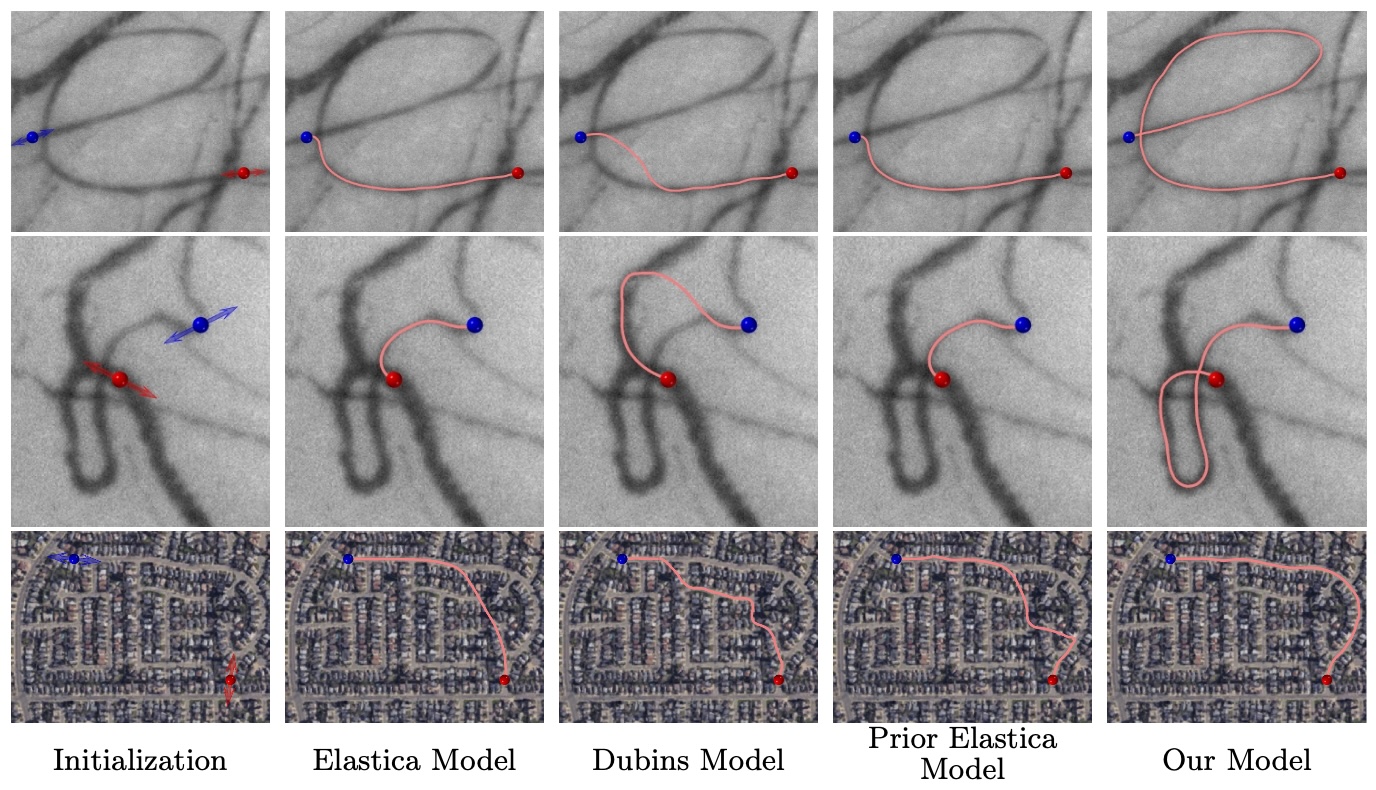}	
\caption{Qualitative comparison on tracking curvilinear structure centerlines from medical images. \textbf{Column 1} The dots and the arrows indicate the source and target points for initializing the minimal path models. \textbf{Columns 2-5} The physical projections of minimal paths derived from different curvature-penalized models are demonstrated.}
\label{fig_MedDataComp}
\end{figure*}


Note that the curvature priors $\varpi$ characterize the prescribed curvature values of curvilinear structure centerlines. We exploit the same procedure as the curvature prior Elastica model~\cite{chen2023computing} to compute $\varpi$ over the orientation-lifted space $\bM$, where a key step for this tracking method is to compute a set of disjoint piecewise smooth curves which fit to the discrete centerline segments. For this purpose, we develop an efficient segmentation-free algorithm for detecting those discrete centerline segments, allowing to fully benefit from the strongly oriented features of the elongated structures. The details can be seen in the Supplemental Information.

We demonstrate in Fig.~\ref{fig_SynComp} and Fig.~\ref{fig_MedDataComp} the qualitative comparison results between the proposed curvature-bounded model and state-of-the-art curvature-penalized models, involving the classical Euler-Mumford Elastica model~\cite{chen2017global,mirebeau2018fast}, the Dubins model~\cite{mirebeau2018fast} and the curvature prior Elastica model~\cite{chen2023computing}  in tracking curvilinear structure centerlines. In each test, the compared models are given two source points and two targets for initialization.  As in the literature~\cite{chen2017global,chen2023computing}, such an initialization manner  allows to generate four minimal paths, among which the one with minimum arrival time is chosen as the output of the model.

In Fig.~\ref{fig_SynComp}, the synthetic curvilinear structures are deliberately designed to facilitate the evaluation of the smoothness, rigidity and elasticity properties exhibited by different minimal path models. 
Columns $2$ to $5$ present the physical projection curves of minimal paths  computed using the Euler-Mumford Elastica model, the Dubins model and the  curvature-bounded model, respectively. The first row  of Fig.~\ref{fig_SynComp} shows a spiral-like curvilinear structure. As observed in columns in columns $2$ to $4$  of the first row,  the resulting minimal paths unfortunately suffer from shortcut artifacts,  failing to follow the desired structures.
In the second row, the image depicts an elongated shape of letter ``$6$", where the red dots indicate the physical endpoints of minimal paths placed around the junction pattern. The physical projection curves of the minimal paths are expected to traverse regions near the red dots in order to accurately delineate the entire structure. The optimal paths shown in columns $2$ and $3$ fail to accurately capture the circular structures. In the third row, the synthetic image presents a more complex curvilinear pattern composed of three self-intersecting loops.
In this test, the models shown in columns $2$ to $4$ fail to accurately extract the circular structures due to  insufficient constraint over the path curvature. In contrast, the curvature-bounded model as demonstrated in column $5$ of Fig.~\ref{fig_SynComp}, successfully produces minimal paths that capture these circular structures,  due to the  blended benefits from the curvature penalization and the mandatory  curvature bounds. 


The synthetic curvilinear patterns illustrated in Fig.~\ref{fig_SynComp} faithfully represent the characteristic geometries of blood vessels, nerve fibers, and roads in medical and aerial images,  as exemplified in Fig.~\ref{fig_MedDataComp}. Consistent with experiments conducted in Fig.~\ref{fig_SynComp}, compared curvature-penalized models exhibit obvious shortcut artifacts and fail to follow the desired curvilinear trajectories, as evident in columns $2$–$4$ of Fig.~\ref{fig_MedDataComp}. In contrast, the proposed curvature-bounded model robustly recovers the underlying structures, even in the presence of complex backgrounds or circular patterns, see column $5$.

\section{Methods}
We describe in this section the numerical method for computing minimal paths of the proposed curvature-bounded model. 
We introduce a static first-order HJB PDE associated with the optimal control problem of interest, a backtracking ordinary differential equation (ODE) for recovering optimal paths from its viscosity solution the minimal action map, and suitable finite difference discretizations of the PDE and ODE.

\subsection*{Upwind Discretization Scheme}
We apply the  scheme introduced in~\cite{mirebeau2018fast,mirebeau2019riemannian} for the discretization of the HJB PDE. Let $U$ be the numerical approximation, defined on the Cartesian grid $\bM_h=\bM\cap (h\bZ^2\times h\bZ/2\pi\bZ)$ of scale $h>0$, of the viscosity solution $\cU_\fs$ by solving in a single pass a well chosen wide stencil finite differences discretization of the HJB PDE \cref{eq_BEHJB}. In the numerical experiments, we set the grid scale to $h=2\pi/N_{\rm angles}$, where $N_{\rm angles}\in\mathbb{N}$ is the number of discrete angles along the third dimension of $\bM$.

Let us recall that the control vector in the proposed curvature-bounded model is formulated as $\Re(\theta,\psi)=(\cos(\psi)\rn(\theta),\xi^{-1}\sin(\psi))$.  
The integral defining the corresponding Hamiltonian $\cH$, see \cref{eq_EquivBEHamiltonian}, is first approximated using the Fejer quadrature rule: following \cite{mirebeau2018fast}
\begin{align}
\cC(\fp)^{2}\cH(\fp,\hfp)&=\frac{3}{8}\int_{-\pi/2}^{\pi/2}\langle\hfp, \Re(\theta,\lambda_\fp\varphi+\Psi_\fp)\rangle_+^2\,\cos\varphi\,d\varphi\nonumber\\
\label{eq_HamiltonianDecom}
&=\frac{3}{16}\sum_{1\leq\ell\leq L}f_\ell\langle\hfp,\Re(\theta,\tilde\varphi_\ell) \rangle_+^2+\cO(L^{-2})
\end{align}
for any point $\fp=(\kp,\theta)$. 
The number of $L$ nodes of the Fejer quadrature rule is fixed to $9$ in all the numerical experiments in this work. 
The Fejer weights $f_\ell \geq 0$, $1 \leq \ell \leq L$ are tabulated in the literature, and the angular nodes are defined as $\varphi_l := (2\ell- L - 1)\pi /(2L)$. For convenience we denoted $\tilde \varphi_\ell := \lambda_\fp\varphi_\ell+\Psi_\fp$.

In the spirit of the wroks~\cite{mirebeau2019riemannian,mirebeau2018fast}, each quadratic term $\langle\hfp,\Re(\theta,\tilde\varphi_\ell) \rangle_+^2$ in~\cref{eq_HamiltonianDecom} is approximated via a family of positive weights $\rho_{j\ell}^\theta:=\rho_j^\epsilon(\Re(\theta,\tilde\varphi_\ell))>0$ and offsets $\dfe^\theta_{j\ell}:=\dfe_j^\epsilon(\Re(\theta,\tilde\varphi_\ell))\in\bZ^3$ whose coordinates are integers, where $1\leq j \leq J:= 6$ are indices and where $\epsilon>0$ is a relaxation parameter, as follows: for any angular node $\tilde\varphi_\ell$ 
\begin{equation}
\label{eq_QuadraAppro}
\langle\hfp,\Re(\theta,\tilde\varphi_\ell)\rangle_+^2=\sum_{1\leq j \leq J}\rho_{j\ell}^\theta\langle\hfp,\dfe_{j\ell}^\theta \rangle _+^2+\|\hfp\|^2\cO(\epsilon^2).	
\end{equation}
The weights  $\rho_{j\ell}^\theta$ and offsets $\dfe^\theta_{j\ell}$ are computed relying on the theorem of Selling’s decomposition of positive quadratic forms~\cite{mirebeau2018fast}. 
Combining \cref{eq_HamiltonianDecom} and~\cref{eq_QuadraAppro}, we approximate the HJB PDE operator value $\cH(\fp,d\cU_\fs(\fp))$ as
\begin{align}
\label{eq_HamiltonQuadraDecom}
\cH(\fp,d\cU_\fs(\fp))\approx
\frac{3}{16}\cC(\fp)^{-2}\sum_{1\leq \ell \leq L}f_{\ell}\sum_{1\leq j \leq J}\rho^\theta_{j\ell}\langle d\cU_\fs(\fp),\dfe^\theta_{j\ell} \rangle_+^2.	
\end{align}
Finally, we approximate the directional gradient $\langle d\cU_\fs(\fp),\dfe^\theta_{j\ell}\rangle_+$ of the minimal action map with the upwind finite difference $h^{-1}(U(\fp)-U(\fp-h\dfe_{j\ell}^\theta))_+$ of the unknown $U:\bM_h\to[0,\infty[$ of our discretization. 
The HJB PDE is thus discretized by the system of equations  
\begin{equation*}
\frac{3}{8}\sum_{1\leq \ell\leq L}f_{\ell} \sum_{1\leq j\leq J}\rho_{j\ell}^\theta\left(\frac{U(\fp)-U(\fp-h\dfe_{j\ell}^\theta)}{h}\right)_+^2=\cC(\fp)^2,
\end{equation*}
for all $\fp \in \bM_h \setminus\{\fs\}$, together with the source constraint $U(\fs) = 0$ and outflow boundary conditions on $\partial \bM$. 

Using both of~\cref{eq_geodesicFlows} and~\cref{eq_HamiltonQuadraDecom},  the geodesic flows $\fV$ can be approximated through the positive weights and offsets of integer coordinates as follows
\begin{equation}
\label{eq_GVApprox}
\fV(\fp)\approx\frac{-3}{8\cC(\fp)^2}\sum_{1\leq \ell \leq L}f_\ell\sum_{1\leq j \leq J}\rho_{j\ell}^\theta \,\langle d\cU_\fs(\fp),\dfe_{j\ell}^\theta \rangle_+\,\dfe^\theta_{j\ell}.
\end{equation}
Then the  finite difference scheme for numerically approximating the geodesic flow  $\fV$ of the form formulated in~\cref{eq_GVApprox} is expressed as
\begin{equation*}
\fV(\fp)\approx\frac{-3}{8\cC(\fp)^2}\!\!	\sum_{1\leq \ell \leq L}f_\ell\sum_{1\leq j \leq J}\rho_{j\ell}^\theta\,\left(\frac{U(\fp)-U(\fp-h\dfe^\theta_{j\ell})} h\right)_+\dfe^\theta_{j\ell}.
\end{equation*}

%
%

\section{Conclusion}
\label{sec_conclusion}
In this paper, we propose a curvature-bounded geodesic model formulated within the HJB PDE framework that  computes globally optimal curves with second-order regularization under arbitrary curvature bounds constraint. The primary theoretical contribution is  the implicit embedding of  mandatory curvature range constraints into the Hamiltonian, which forms the foundation of the associated HJB PDE.  Furthermore, we develop an upwind discretization scheme for the Hamiltonian of the curvature-bounded model, yielding a discretized HJB system that seamlessly integrates with the state-of-the-art Hamiltonian Fast Marching method, enabling efficient approximation of minimal action maps.

The proposed curvature-bounded model effectively  tracks curvilinear structures in complex scenarios, where the construction of the curvature bounds, defining the admissible curvature varying ranges,  is a critical step for practical applications.In our formulation, the curvature bound at each point is defined by  a prior term combined with a scalar  offset term that adapts to local  features of elongated shapes. These curvature priors are obtained by fitting smooth curves to disjoint segments of predefined curvilinear centerlines. However, their accuracy  may be degraded by fragmentation in the precomputed curvilinear centerlines. An alternative and promising direction is to design a path voting scheme, in which curvature priors and offset terms are efficiently estimated from a dense set of minimal paths distributed across all plausible curvilinear patterns.

Moreover, the  proposed curvature-bounded geodesic model is a powerful tool for extracting curvilinear structures, providing that their endpoints and the tangent directions at those points are given. Future work will focus on integrating this model with a path classifier~\cite{gupta2023topology,turetken2016reconstructing} to enable the automatic reconstruction and tracking of entire curvilinear networks.


\section*{Appendices}
\appendix
\section{Computing the Minimal Action Map Through the Static First-order HJB PDE Framework}
For clarity, we start with a reformulation of the optimal control problem of~\cref{eq_MinimalTime}, involving the control sets $\cB$, as the computation of a path length distance \cite{mirebeau2019riemannian,bardi1997Optimal}, from a source point $\fs\in\bM$ to a target point $\fp\in\bM$.
Indeed, the data of the control sets $\cB$ is equivalent to the data of a (sub-Finslerian) geodesic metric $\cF:\bM\times\bE\to[0,\infty]$ defined as 
\begin{equation}
\cF(\fp,\dfp):=\inf\,\bigl\{\lambda>0~|~(\dfp/\lambda)\in\cB(\fp)\bigr\}	.
\end{equation}
The corresponding path length reads
\begin{equation}
\label{eq_Length}
\Length_\cF(\wp):=\int_0^1\cF(\wp(t),\wp^\prime(t))dt,
\end{equation}
and its minimization leads to an equivalent expression of the minimal traveltime from $\fs$ to $\fp$ w.r.t.\ $\cB$, see \cref{eq_MinimalTime} 
\begin{align}
\label{eq_MinimalLength}
\cT_{\cB}(\fs,\fp):=\inf\bigl\{\Length_\cF(\wp)~|~&\wp\in\Lip([0,1],\bM),\nonumber\\
&\wp(0)=\fs,\wp(1)=\fp\bigr\}.	
\end{align}

Fixing the source point $\fs$, the minimal action map $\cU_\fs:\bM\to [0,\infty]$ assigns to each possible endpoint $\fp$ the corresponding minimum arrival time:
\begin{equation}
\label{eq_MAP}
\cU_\fs(\fp)=\cT_\cB(\fs,\fp).
\end{equation}
The static first-order HJB PDE framework~\cite{cohen1997global,bardi1997Optimal} characterizes $\cU_\fs$ as the unique viscosity solution to a boundary value problem:
\begin{equation}
\label{eq_BEHJB}
\begin{cases}
\cH(\fp,d\cU_\fs(\fp))=\frac{1}{2},&\forall \fx\in\bM\backslash\{\fs\}\\
\cU_\fs(\fs)=0,&\text{(boundary condition)},
\end{cases}
\end{equation}
where $d\cU_\fs$ denotes the first-order differential of the minimal action map $\cU_\fs$. 

The geodesic flow towards the seed $\fs$
is represented by a vector field $\fV$ defined in terms of the minimal action map $\cU_\fs$:
\begin{equation}
\label{eq_geodesicFlows}
\fV(\fp)=-\partial_2 \cH(\fp,d\cU_\fs(\fp))
\end{equation}
where $\partial_2\cH(\fp,\hfp):=\partial\cH(\fp,\hfp)/\partial\hfp$. The vector $\fV(\fp)$ is the negative gradient of the minimal action map $\cU_\fs$ at $\fp$ and w.r.t.\ the local metric $\cF(\fp,\cdot)$, and its characterizes the tangent direction of the shortest path towards $\fs$. 
More precisely, consider an arbitrary point $\fp \in \bM$, and the path $\cG : [0,T] \to \bM$ obeying the ordinary differential equation
\begin{equation*}
\label{eq_Backtracking}
\cG^\prime(t)=\fV(\cG(t)),~\forall t\in]0,T[.
\end{equation*}
with initial value $\cG(0)=\fp$ and total time $T = \cU_\fs(\fp)$. 
This \emph{backtracked} path $\cG$ travels by construction from the target point $\fp$ to the source point $\fs$. By re-parametrization one obtains a minimal path $\cG_{\fs,\fp}(t):=\cG(T(1-t))$ from the source $\cG_{\fs,\fp}(0)=\fs$ to the target $\cG_{\fs,\fp}(1)=\fp$. 

\providecommand{\bI}{\mathbb I}
\providecommand{\fone}{\mathbf{1}}
\providecommand{\fM}{\mathbf M}
\providecommand{\fr}{\mathbf r}
\providecommand{\fb}{\mathbf b}
\providecommand{\fA}{\mathbf A}
\providecommand{\fy}{\mathbf y}
\providecommand{\fn}{\mathrm{n}}
\providecommand{\tw}{\tilde{w}}
\providecommand{\tu}{\tilde{u}}
\providecommand{\tI}{\tilde{I}}
\providecommand{\tCU}{\tilde{\mathcal{U}}}
\providecommand{\OLP}{\bm{\rho}}
\providecommand{\OLT}{\bm{\eta}}
\providecommand{\ky}{\mathfrak y}
\providecommand{\kL}{\mathfrak L}
\providecommand{\kT}{\mathfrak T}
\providecommand{\cK}{\mathcal K}
\providecommand{\cE}{\mathcal E}
\providecommand{\cP}{\mathcal P}
\providecommand{\cS}{\mathcal S}
\providecommand{\cD}{\mathcal D}
\providecommand{\cM}{\mathcal M}
\providecommand{\cN}{\mathcal N}
\providecommand{\cV}{\mathcal V}
\providecommand{\cX}{\mathcal X}
\providecommand{\tka}{\tilde\kappa}
\providecommand{\cW}{\mathcal W}
\providecommand{\hkx}{\hat{\mathfrak{x}}}
\providecommand{\hks}{\hat{\mathfrak{s}}}
\providecommand{\rTC}{{\rm TC}}
\providecommand{\rEM}{{\rm EM}}
\providecommand{\rmD}{{\rm D}}
\providecommand{\rRS}{{\rm RS}}
\providecommand{\ts}{\textstyle}
\providecommand{\vp}{\varphi}
\providecommand{\ve}{\varepsilon}
\providecommand{\ray}{\Re_z(p)}
\providecommand{\trans}{\top}
\providecommand{\dx}{\dot x}
\providecommand{\dy}{\dot y}
\providecommand{\ds}{\dot s}
\providecommand{\dfr}{\dot{\mathbf r}}
\providecommand{\dfx}{\dot{\mathbf x}}
\providecommand{\dfnt}{{\dot{\mathbf n}}_\theta}
\providecommand{\hx}{\hat{x}}
\section{Computing Geometric Features from Curvilinear Structures}
\subsection*{Orientation Scores}
In practice, the tool of the orientation scores is designed to characterize the appearance of curvilinear structures in an anisotropy manner, so as to alleviate the negative influence from the mixed complicated data distribution and  structures of crossing morphology. Recall that $\bM=\Omega\times\bS^1$ represents the orientation-lifted space, where $\Omega\subset\bR^2$ is the image domain and $\bS^1=[0,2\pi[$  is an interval of periodic boundary condition. 
The orientation score map,  denoted by a scalar-valued map $\tilde\alpha:\bM\to[0,\infty[$, can be extracted from a gray level image $f:\Omega\to\bR$ via an orientation-dependent filter bank $\hbar$:
\begin{equation*}
\tilde\alpha(\kp,\theta)=(\hbar_\theta\ast f)(\kp)
\end{equation*}
for any physical position $\kp\in\Omega$ and for any an angular coordinate $\theta\in\bS^1$, where $\ast$ is a convolution operator. In practice, the orientation score map $\tilde\alpha$ is usually normalized to the range $[0,1]$:
\begin{equation}
\label{eq_NLOS}
\alpha(\kp,\theta)=\tilde\alpha(\kp,\theta)/\|\tilde\alpha\|_\infty.
\end{equation}
Many curvilinear structure filters can be applied to compute the orientation score map $\alpha$, where typical examples may involve the steerable filters~\cite{law2008three,jacob2004design}, the filter depending on the cake wavelets~\cite{franken2009crossing} and filter consisting of multiple Gaussian kernels~\cite{moriconi2018inference}. The value of the orientation score  $\alpha(\kp,\theta)$ characterizes the probability  that the physical position $\kp$ is inside the curvilinear structures and simultaneously the direction $\fn(\theta)=(\cos\theta,\sin\theta)$ is collinear to the orientation that a curvilinear structure should have at the physical position $\kp$. In Fig.~\ref{fig_OS}, we illustrate an example for the orientation scores computed from an image involving two curvilinear structures which cross one another.

\subsection*{Detection of the  Centerlines of Curvilinear Structures}
We introduce a method for extracting the centerlines of curvilinear structures using the geometric descriptor of orientation scores. The basic idea is to regard the centerlines of curvilinear structures as a collection of optimal points of orientation score map $\alpha$. In addition, these optimal points should also pass several tests, which are detailed in the following paragraphs. 

Firstly, we establish a binary-valued function in the orientation-lifted space, denoted by $\Phi:\bM\to\{0,1\}$, such that fixing a position $\kp$ the value $\Phi(\kp,\theta)=1$ implies that the orientation score $\alpha(\kp,\theta)$ is a local maximum of $\alpha(\kp,\cdot)$ along the angular dimension, i.e. given a proper value $\epsilon>0$, for any angle that $\phi\in[\theta-\epsilon,\theta+\epsilon]\text{~and~}\phi\neq\theta$, one has $\alpha(\kp,\theta)>\alpha(\kp,\phi)$. In other words, $\Phi(\kp,\cdot)$ characterizes the optimal angles at the physical position $\kp$ in the sense of the orientation scores. In case the position $\kp$ is located at a curvilinear structure centerline and $\Phi(\kp,\theta_*)=1$ at some angle $\theta_*$, then the direction $\fn(\theta_*)=(\cos\theta_*,\sin\theta_*)$ should be collinear to the tangent of the centerline at $\kp$.

Secondly, for each orientation-lifted point $\fp=(\kp,\theta)\in\bM$ such that $\Phi(\kp,\theta)=1$, we detect two points $(\kp_1,\theta_1)$ and $(\kp_2,\theta_2)$ which are  close to $\fp$. In our work this is  implemented in a two-step procedure. Specifically, the first step is to produce the physical positions $\kp_1$ and $\kp_2$, respectively by moving the physical position $\kp$ along the directions $\vp(\theta)=(-\sin\theta,\cos\theta)$ and $-\vp(\theta)=(\sin\theta,-\cos\theta)$, reading as
\begin{align*}
&\kp_1=\kp+\iota\vp(\theta)\\
&\kp_2=\kp-\iota\vp(\theta),
\end{align*}
where $\iota>0$ is an offset parameter. 
The angles $\theta_1$ (resp. $\theta_2$) corresponds to the local maximum of the orientation scores $\alpha(\kp_1,\cdot)$ (resp. $\alpha(\kp_2,\cdot)$) in  the angular dimension within the range $[\theta-\zeta,\theta+\zeta]$ of a periodic boundary condition, where $\zeta>0$ is a positive constant. In other words, the objective angles $\theta_1$ and $\theta_2$ are detected as
\begin{align*}
&\theta_1=\underset{\phi\in[\theta-\zeta,\theta+\zeta]}{\arg\max}~\alpha(\kp_1,\phi)\\
&\theta_2=\underset{\phi\in[\theta-\zeta,\theta+\zeta]}{\arg\max}~\alpha(\kp_2,\phi).
\end{align*}
Then we obtain a new function $\Phi_1:\bM\to\{0,1\}$ which is defined as
\begin{equation*}
\Phi_1(\kp,\theta)=
\begin{cases}
1,&\text{if~}\alpha(\kp,\theta)>\max\{\alpha(\kp_1,\theta_1),\alpha(\kp_2,\theta_2)\}~\text{and}~\Phi(\kp,\theta)=1 \\
0,&\text{otherwise}.
\end{cases}	
\end{equation*}
Actually, the function $\Phi_1(\kp,\theta)=1$ means that the physical position $\kp$ is a candidate point of the centerline of a curvilinear structure, since it is a locally optimal point, in terms of the orientation score map $\alpha$, along the normal direction of the centerline at $\kp$. In order to reduce the negative influence from image noise and also to further refine the detected locally optimal points, we also invoke the test considered in the literature~\cite{wang2013interactive} related to the image gradients $\vartheta:\Omega\to\bR^2$ of a gray level image $f$, i.e.
\begin{equation*}
\vartheta(\kp)=(\nabla G_\sigma\ast f)(\kp)
\end{equation*}
where $G_\sigma$ is a Gaussian kernel whose standard deviation is $\sigma$ and where $\nabla G_\sigma$ denotes its standard Euclidean gradient in the space $\bR^2$. As implemented in~the literature~\cite{wang2013interactive}, each physical position $\kp$ is assigned to an optimal angle $\theta^*$
\begin{equation*}
\theta^*:=\underset{\phi\in[0,2\pi[}{\arg\max}~\alpha(\kp,\phi),	
\end{equation*}
which characterizes the  orientation of the curvilinear structure at the physical position $\kp$. The original test presented in~\cite{wang2013interactive} says that a point $\kp$ is a centerline point if there exists a radius $r$ such that the pair $(\kp,\theta^*)$ passes the test 
\begin{equation}
\label{eq_NotUsedTest}
\vartheta\bigl(\kp+r\vp(\theta^*)\bigr)=-\vartheta\bigl(\kp-r\vp(\theta^*)\bigr).	
\end{equation}
In this work,  we consider a new test, slightly different to the one in~\eqref{eq_NotUsedTest},  which utilizes all the orientation-lifted points $(\kp,\theta)$ such that $\Phi_1(\kp,\theta)=1$. In particular, we evaluate the following formulation
\begin{equation}
\label{eq_GradTest}
\sign\left(\langle\vp(\theta),\vartheta(\kp+r\vp(\theta))\rangle\right)=-\sign\left(\langle\vp(\theta),\vartheta(\kp-r\vp(\theta))\rangle\right).
\end{equation}
We define the target indicator $\Phi_2:\bM\to\{0,1\}$ of the centerlines of curvilinear structures, where  $\Phi_2(\kp,\theta)=1$ if $\Phi_1(\kp,\theta)=1$ and if the point $(\kp,\theta)$ passes the test in~\eqref{eq_GradTest}. Finally, we obtain a map $\zeta:\Omega\to\{0,1\}$ that involves all the admissible centerline points 
\begin{equation}
\label{eq_SkeleMap}
\zeta(\fp)=
\begin{cases}
1,&\text{if~}\int_0^{2\pi}\Phi_2(\kp,\theta)d\theta>0\\
0,&\text{otherwise}.
\end{cases}	
\end{equation}
The binary-valued map $\zeta$ characterizes the curvilinear structure centerlines involved in the image data. It will be further processed to generate a family of disjoint centerline segments, as presented in next section.

\section{Computing the Curvature Prior Map \texorpdfstring{$\varpi$}{varpi} from Curvilinear Structure Centerlines}
In the application of tracking curvilinear structure centerlines using the introduced bounded Elastica model,  a crucial ingredient is the construction of the curvature prior map $\varpi$ using the path curvature which is the intrinsic geometric properties of the curvilinear structure centerline candidates, so as to compute the curvature bounds $\Im_{\rm min}$ and $\Im_{\rm max}$. 
In our work,  we follow the efficient method proposed in the literature~\cite{chen2023computing} to compute the curvature prior map $\varpi$, where the first step is to fit piecewise smooth curvature-penalized optimal paths to the disjoint discrete centerline segments using the map $\zeta$, as introduced in the following section. 

\subsection*{Fitting Smooth Curvature-penalized Minimal Paths to Disjoint Centerline Segments of Curvilinear Structures}
Let  $\Omega_h:=\Omega\cap \bZ^2$ be a  Cartesian grid, where $h$ is the grid scale. Without  loss of generality, we set the scale $h=1$.  In this discrete setting, a discrete centerline segment is defined as a family of ordered grid points of eight-connection and a junction point is regarded as a particular grid point which has more than two eight-connected neighbouring points. 

The centerline indicator map $\zeta$ formulated in~\eqref{eq_SkeleMap} involves the information of the  potential discrete centerline segments. In order to guarantee that the width of each individual centerline segment equals exactly  one grid point, we apply the morphological filter to the set $\{\kp\in\Omega_h~|~\zeta(\kp)>0\}$ and then remove all the junction points to generate $N$ disjoint centerline segments $\Gamma_j$ indexed by $1 \leq j \leq N$. Fig.~\ref{fig_CurvaturePrior}a illustrates those discrete centerline segments using  different colors. From them one can also construct $N$ disjoint tubular neighbourhood regions $\cT_j\subset\Omega$ for $1\leq j \leq N$ in terms of Euclidean distances. This is to say
\begin{equation}
\cT_j=\{\kp\in \Omega~|~d(\kp,\Gamma_j)<d(\kp,\Gamma_i),\,\forall i\neq j\},	
\end{equation}
where $d(\kp,\Gamma_j)$ denotes the Euclidean distance between a physical position $\kp$ and the discrete centerline segment $\Gamma_j$. In this way, one can point out that for any $i \neq j$, the neighbourhoods $\cT_i$ and $\cT_j$ are disjoint, i.e.,  $\cT_i\cap  \cT_j=\emptyset$. In Fig.~\ref{fig_CurvaturePrior}b, we illustrate an example of these disjoint neighbourhood regions.

In the open bounded and connected domain $\cT_j$, each individual discrete centerline segment $\Gamma_j$  can provide two endpoints $\kp_j,\,\kq_j\in\Omega$, i.e. an endpoint only has a single neighbouring point. With these definitions in hands, we attempt to minimize the following path energy in order to track the path $\cG_j$
\begin{equation}
\label{eq_CurvatureEnergy}
\int_0^1\cC(\gamma(t),\eta(t))\kC(\xi\kappa(t))dt,\quad\text{subject to}~
\begin{cases}
\gamma(0)=\kp_j,&\\
\gamma(1)=\kq_j,&\\
\gamma(t)\in \cT_j,&~\forall t\in[0,1].	
\end{cases}
\end{equation}
where $\cC$ is the image data-driven cost function, $\kC$ is a cost of the path curvature $\kappa$ and $\xi>0$ is a weighting parameter on the curvature. More specifically, we define the data-driven cost function $\cC$ as follows:
\begin{equation*}
\cC(\kp,\theta)=\exp\left(-\beta\alpha(\kp,\theta)\right),
\end{equation*}
for any physical position $\kp\in\Omega$ and any angular coordinate $\theta\in\bS^1$. One can point out that the cost $\cC$ is a decreasing function of the orientation scores $\alpha$. It is also used as the data-drive cost function of the proposed bounded Elastica model. 
Moreover, the curvature cost function $\kC$ is dependent to the curvature-penalized minimal path model considered. For instances, the cost $\kC(a)=1+a^2$ for any $a\in\bR$ corresponds to the Euler-Mumford  elastica model~\cite{chen2017global,mirebeau2018fast} and  $\kC(a)=\sqrt{1+a^2}$ corresponds to the Reeds-Shepp optimal curve model~\cite{duits2018optimal}.

The minimization of the weighted curve length defined in~\eqref{eq_CurvatureEnergy} can be implemented via the HJB PDE framework. Specifically,one can estimate the minimal action map $\cU_j$ by addressing the  HJB PDE by the Hamiltonian Fast-Marching method~\cite{mirebeau2018fast} or by the GPU-implemented way~\cite{mirebeau2023massively}. Then a gradient descent procedure is performed on the minimal action map $\cU_j$ to track the minimal path $\cG_j=(\gamma_j,\eta_j)$ lying inside the corresponding tubular neighbourhood $\cT_j$. Note that during the estimation of the minimal action map $\cU_j$, the set $\partial\cT_j\times\bS^1$ of the tubular neighbourhood  is used as a wall to limit the distance propagation within the domain $\cT_j \times \bS^1$.
In Fig.~\ref{fig_CurvaturePrior}, we illustrate this procedure using the image data shown in Fig.~\ref{fig_OS}a.

\subsection*{The Computation of the Curvature Prior Map $\varpi$}
Once all the minimal paths $\cG_j=(\gamma_j,\eta_j)$ are generated, we can estimate their curvature $\kappa_j$ of each physical projection curve $\gamma_j$ as follows:
\begin{equation*}
\kappa_j(u)=\frac{\eta_j^\prime(u)}{\|\gamma_j^\prime(u)\|},
\end{equation*}
for any parameter $u\in[0,1]$. 

Eventually, we follow the method introduced in the literature~\cite{chen2023computing} to compute  the curvature prior map $\varpi:\bM\to\bR$ using the computed physical projection curves $\{\gamma_j\}_j$ and their curvature $\{\kappa_j\}_j$, as shown in Fig.~\ref{fig_CurvaturePrior}d. In this figure, the values of $\kappa_j$ are visualized by different colors, where the red arrows indicate the parameterization of the respective planar components $\gamma_j$ of the minimal paths $\cG_j$. Following~\cite{chen2023computing}, we define by $0\leq\Lambda\leq \min_j\{\|\kappa_j\|^{-1}_\infty\}$ a bounding parameter and let $\cN_j$ be the unit normal to the curve $\gamma_j$.  Then one can construct the curvature prior map $\varpi$ by   
\begin{equation*}
\varpi(\kp,\theta)=
\begin{cases}
\kappa(u)\sign\left(\langle\fn(\theta),\fn(\eta_j(u)) \rangle\right),&\text{if~}\kp\in\cT_j~\text{and}~\kp=\gamma_j(u)+\lambda\cN_j(u),\,\forall\lambda\in[-\Lambda,\Lambda]\\
0,&\text{otherwise}.
\end{cases}
\end{equation*}
or by 
\begin{equation*}
\varpi(\kp,\theta)=
\begin{cases}
\kappa(u)\langle\fn(\theta),\fn(\eta_j(u)) \rangle,&\text{if~}\kp\in\cT_j~\text{and}~\kp=\gamma_j(u)+\lambda\cN_j(u),\,\forall\lambda\in[-\Lambda,\Lambda]\\
0,&\text{otherwise}.
\end{cases}
\end{equation*}
Note that $\gamma_j+\lambda\cN_j$ represent an offset curve of $\gamma_j$. We refer to the literature~\cite{chen2023computing} for more detail on the computation of the curvature prior map $\varpi$.

\section{The Dubins Model with Extended Curvature Bounds}
In contrast to the introduced bounded Elastica model which takes into account arbitrary curvature bounds as a geometric prior, the Dubins model~\cite{mirebeau2018fast} alternatively invokes a boundedness limitation to the absolute path curvature.  As introduced in the work~\cite{mirebeau2018fast}, the Hamiltonian $\kH^{\rm D}$ of the original Dubins model is formulated as 
\begin{align}
\label{eq_DubinsHamiltonian}	
\kH^{\rm D}(\fp,\hfp)&=\frac{1}{2}\max\left\{0,\,\left\langle\hfp,\left(\fn(\theta),\xi^{-1}\right) \right\rangle,\left\langle\hfp,\left(\fn(\theta),-\xi^{-1}\right)\right\rangle\right\}^2\nonumber\\
&=\frac{1}{2}\max\left\{\left\langle\hfp,\left(\fn(\theta),\xi^{-1}\right) \right\rangle_+,\left\langle\hfp,\left(\fn(\theta),-\xi^{-1}\right)\right\rangle_+\right\}^2
\end{align}
for any point $\fp=(\kp,\theta)\in\bM$ and for any co-vector $\hfp=(\hkp,\htheta)\in\bR^2\times\bR$. 

The control set $\kB^{\rm D}$ of the Dubins model can be formulated as follows~\cite{mirebeau2018fast}
\begin{align*}
\kB^{\rm D}(\fp)=&\left\{(\dot{\kp},\dtheta);\xi|\dtheta|\leq\|\dot\kp\| \leq 1,\dot\kp=\|\dot\kp\|\fn(\theta)\right\}\\
=&\left\{a\fn(\theta),b\xi^{-1});0\leq |b|\leq a \leq 1\right\}.
\end{align*}
As discussed in~\cite{mirebeau2018fast}, the control set $\kB^{\rm D}(\fp)$ is a triangle whose vertices are $\mathbf{0}$, $(\fn(\theta),1/\xi)$ and $(\fn(\theta),-1/\xi)$. For any minimal path $\cG=(\tilde\gamma,\tilde\eta):[0,1]\to\bM$, the curvature $\tilde\kappa=\tilde\eta^\prime/\|\tilde\gamma^\prime\|$ satisfies
\begin{equation*}
\tilde\kappa(u)\in\left[-\xi^{-1},\xi^{-1}\right],\quad \forall u\in[0,1]	
\end{equation*}
Furthermore, for the purpose of visualization, we consider the following sliced control sets 
\begin{equation*}
\tilde{\kB}^{\rm D}(\fp)=\{(\dot\nu,\dtheta)\in\bR^2;\dot\nu>0,(\dot\nu\fn(\theta),\dtheta)\in\kB^{\rm D}(\fp)\},	
\end{equation*}
where we recall that $\fn(\theta)$ is defined as $\fn(\theta)=(\cos\theta,\sin\theta)$. In Fig.~\ref{fig_DubinsControlSet}, we illustrate the sliced control set $\kB^{\rm D}(\fp)$ at a point $\fp$, constructed with parameters $\xi=1$, $\xi=2$ and $\xi=3$.

In this original Dubins model, the constraint imposed to the absolute path curvature is implemented via the constant parameter $\xi$. However, such a setting may lose the information from the image data. As pointed out in the literature~\cite{mirebeau2019hamiltonian}, the constant parameter $\xi$ can be extended as a pointwise function  regarded as the absolute curvature bounds of the Dubins car model. For this purpose, we consider  a scalar-valued positively-defined function $\cK:\bM\to\bR^+$, defined by
\begin{equation}
\label{eq_DubinsBounds}
\cK(\fp)=\max\bigl\{|\Im_{\rm min}(\fp)|,\,|\Im_{\rm max}(\fp)|\bigr\},	
\end{equation}
where $\Im_{\rm min},\Im_{\rm max}:\bM\to\bR$ are the curvature bounds used in the introduced bounded Elastica model.

\begin{figure}[htbp]
\centering
\includegraphics[width=0.9\linewidth]{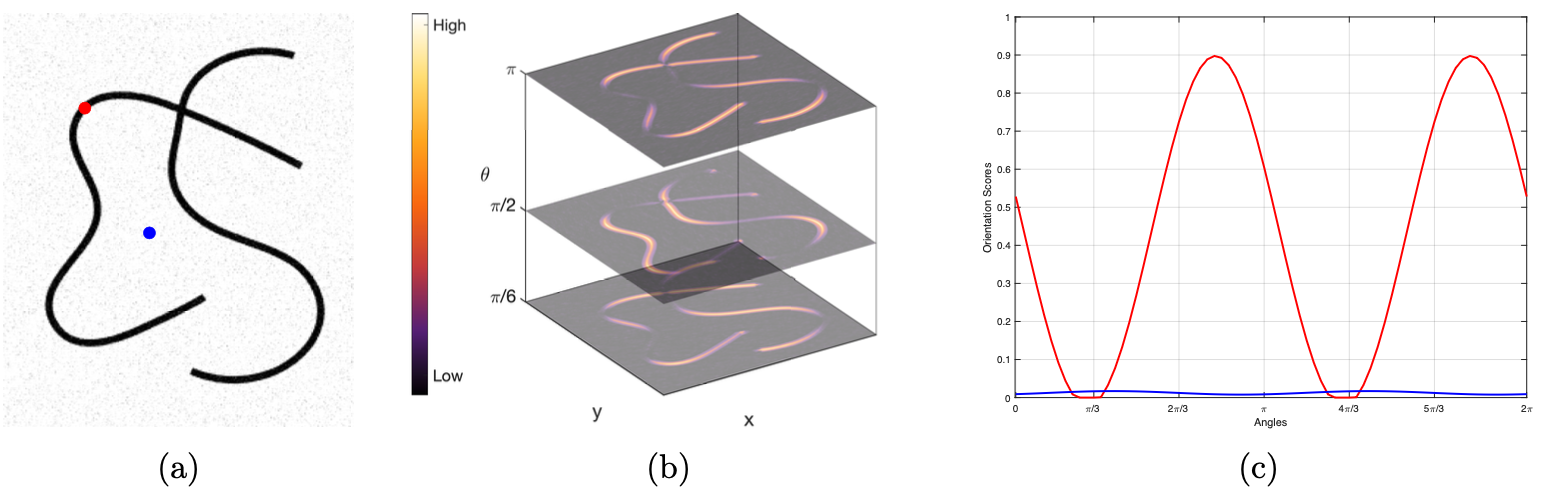}
\caption{Visualization for the orientation scores of curvilinear structures. (\textbf{a}) The raw image for estimating the orientations score map by the optimally oriented flux filter as introduced in the literature~\cite{law2008three}. The red and blue dots represent two sampled points, which are respectively located at the curvilinear structure and at the background.  (\textbf{b}) Visualization of orientation scores $\alpha(x,\theta)$ at three slices with respect to three sampled angles $\theta=\pi/6,\,\pi/2$ and $\pi$. (\textbf{c}) Plots of the normalized orientation score values $\alpha$ at two sampled points, where the red and blue lines  represent the values of $\alpha$ at the red and blue dots in figure (a), respectively.}
\label{fig_OS}
\end{figure}

\begin{figure}[htbp]
\centering
\includegraphics[width=0.95\linewidth]{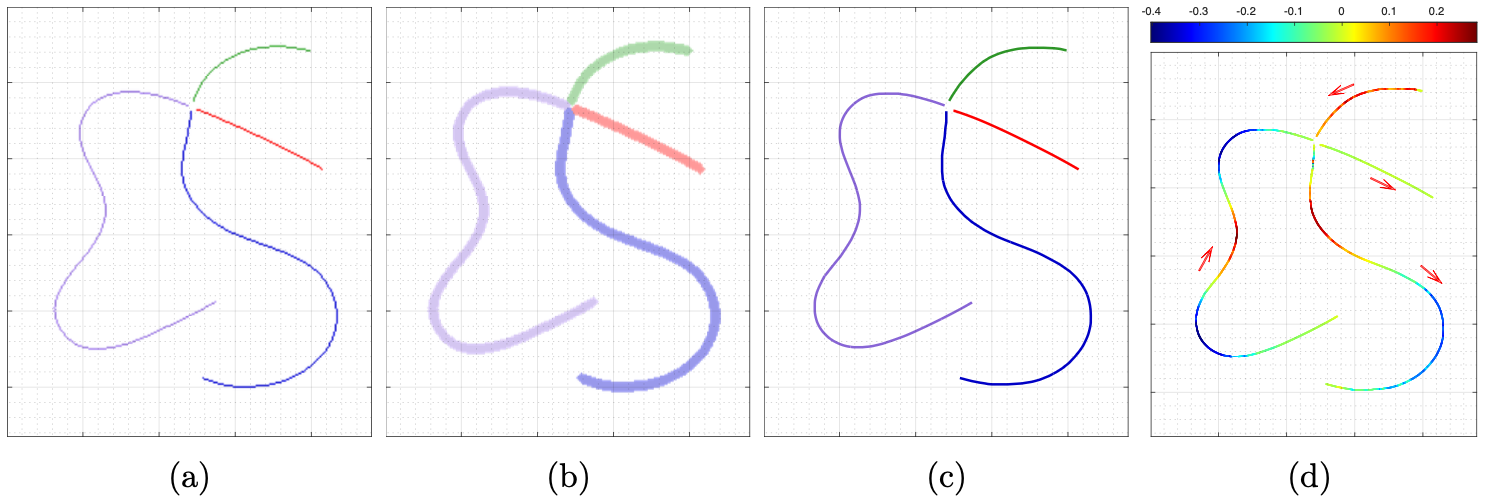}
\caption{Illustration for the computation of the curvature prior map $\varpi$. (\textbf{a}) Visualization of the piecewise disjoint discrete centerline segments $\Gamma_j$ via different colors. \textbf{b} Visualization for the disjoint tubular neighbourhood regions $\cT_j$. Note that for better visualization we slightly enlarge the width of each tubular neighbourhood region.  \textbf{c} Visualization for the planar components $\gamma_j$ of the smooth paths $\cG_j$, each of which  fits to the discrete centerline segments. \textbf{c} The computed curvature of each fitting path. The red arrows indicate the respective parameterization directions of the physical projection curves $\gamma_j$ of the fitting minimal paths.}
\label{fig_CurvaturePrior}
\end{figure}

\begin{figure}[htbp]
\centering
\includegraphics[width=0.9\linewidth]{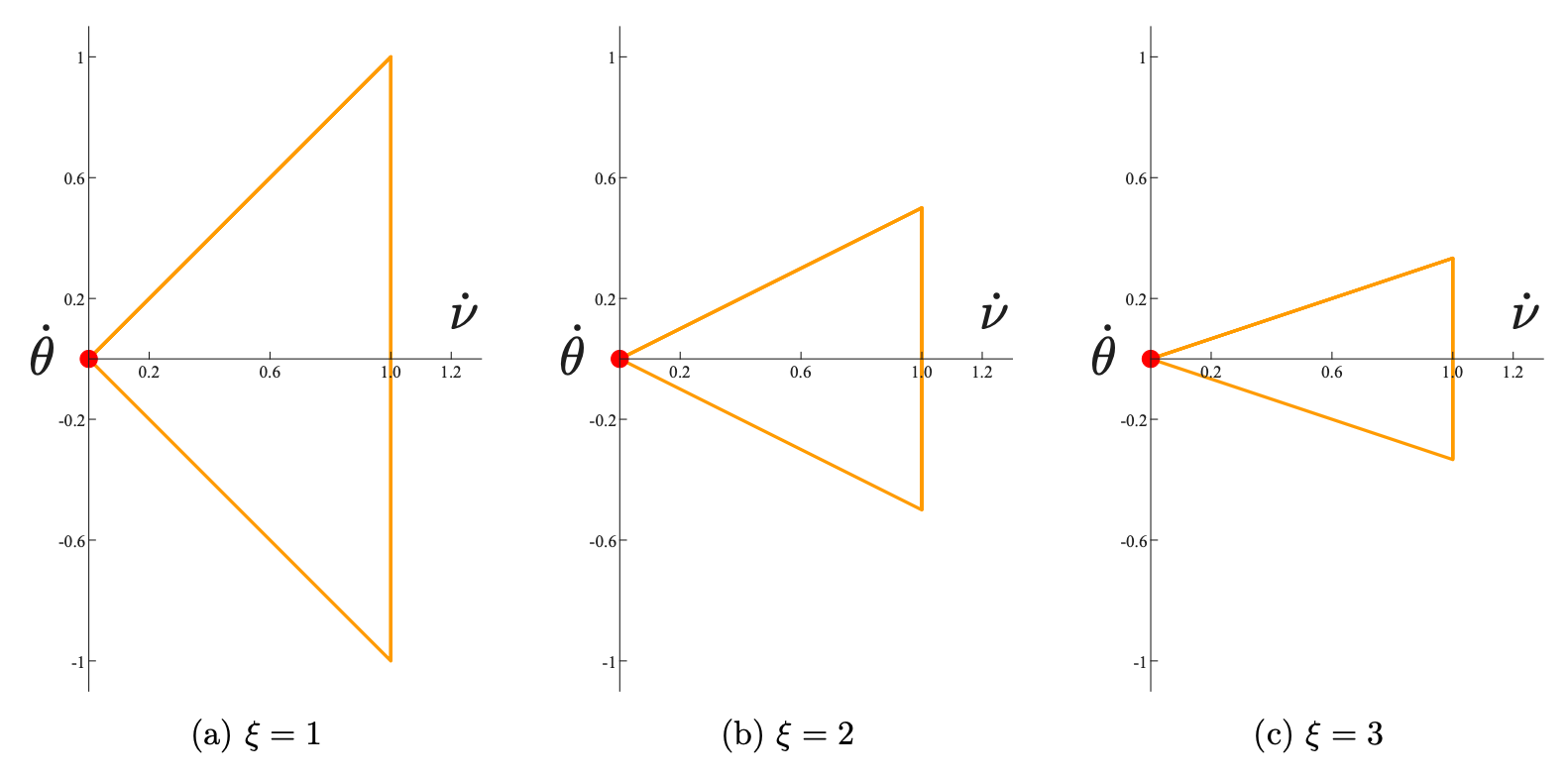}
\caption{Illustration for the sliced control sets $\tilde{\kB}^{\rm D}(\fp)$ of the Dubins model with different values of the parameter $\xi$. The red dots are the origins $(0,0)$ of the sliced control sets. (\textbf{a}) - (\textbf{c}) The sliced control sets with respect to $\xi=1$, $\xi=2$ and $\xi=3$, respectively.}
\label{fig_DubinsControlSet}
\end{figure}

\FloatBarrier





\bibliographystyle{plain}
\bibliography{minimalPaths}

\end{document}